





\documentclass[sigconf]{aamas}


\usepackage{balance} 

\usepackage{amssymb}            
\usepackage{mathtools}          
\mathtoolsset{showonlyrefs}     
\usepackage{graphicx}           
\usepackage{xcolor}
\usepackage{subcaption}         
\usepackage[space]{grffile}     
\usepackage{url}                
\usepackage{dsfont}
\usepackage{algorithm}
\usepackage{algpseudocode}
\usepackage{amsmath}
\usepackage{amsthm}
\usepackage{enumitem}
\def\digits#1{%
  \number#1}

\newtheorem*{remark}{Remark}

\theoremstyle{plain}
\newtheorem{theorem}{Theorem}[section]

\newtheorem{lemma}[theorem]{Lemma}

\theoremstyle{definition}
\newtheorem{definition}[theorem]{Definition}

\theoremstyle{remark}

\newcommand{\learnedR}{\ensuremath{\widehat{R}}}
\newcommand{\learnedGamma}{\ensuremath{\widehat{\gamma}}}
\newcommand{\gtR}{\ensuremath{R_0}}
\newcommand{\gtGamma}{\ensuremath{\gamma_{0}}}
\newcommand{\cR}{\ensuremath{R}}
\newcommand{\cGamma}{\ensuremath{\gamma}}

\newcommand{\optimalGamma}{\ensuremath{\widehat{\gamma}^*}}
\newcommand{\optimalPolicy}[2]{\ensuremath{\pi^*_{#1, #2}}}

\newcommand{\expertPolicy}{\ensuremath{\pi^E}}
\newcommand{\approxExpertPolicy}{\ensuremath{\hat{\pi}^E}}
\newcommand{\MDP}{\ensuremath{M}}
\newcommand{\stateSpace}{\ensuremath{S}}
\newcommand{\actionSpace}{\ensuremath{A}}
\newcommand{\transition}{\ensuremath{P}}
\newcommand{\expertDemo}{\ensuremath{D}}

\newcommand{\valueFunction}[3]{\ensuremath{V^{#1}_{#2, #3}}}
\newcommand{\policyComplexity}[1]{\ensuremath{\Pi_{#1}}}
\newcommand{\IRL}{\ensuremath{\Re}}
\newcommand{\approxIRL}{\ensuremath{\hat{\Re}}}
\newcommand{\feasibleSet}[1]{\ensuremath{\mathcal{R}_{#1}}}
\newcommand{\QFunction}[3]{\ensuremath{Q^{#1}_{#2, #3}}}
\newcommand{\VFunction}[3]{\ensuremath{V^{#1}_{#2, #3}}}
\newcommand{\expertFilter}[1]{\ensuremath{B^{#1}}}
\newcommand{\expertFilterC}[1]{\ensuremath{\bar{B}^{#1}}}
\newcommand{\potential}{\ensuremath{\phi}}

\newcommand{\partialMDP}{\ensuremath{\mathcal{M}}}

\newcommand{\piR}[1]{\ensuremath{R^{#1}}}
\newcommand{\piRh}{\ensuremath{\hat{R}^{\pi}}}
\newcommand{\piTransition}[1]{\ensuremath{P^{#1}}}
\newcommand{\hR}{\ensuremath{\hat{R}}}
\newcommand{\optimalA}{\ensuremath{a^*_{s}}}
\newcommand{\optimalAS}{\ensuremath{a^*_{s^*}}}
\newcommand{\optimalS}{\ensuremath{s^*}}
\newcommand{\optimalQ}{\ensuremath{Q^*}}

\newcommand{\Pe}{\ensuremath{P_{a_1}}}
\newcommand{\Pa}{\ensuremath{P_{a}}}

\newcommand{\approPe}{\ensuremath{\widehat{P}_{a_1}}}
\newcommand{\approFFunction}[2]{\ensuremath{\widehat{F}_{#1, #2}}}

\newcommand{\optimalGammaCv}{\optimalGamma_{\text{cv}}}
\newcommand{\optimalGammaCvAll}{\optimalGamma_{\text{oracle}}}

\newcommand{\learnedHorizon}{\ensuremath{\widehat{T}}}
\newcommand{\gtHorizon}{\ensuremath{T_{0}}}
\newcommand{\optimalHorizon}{\ensuremath{\widehat{T}^*}}
\newcommand{\optimalHorizonCv}{\optimalHorizon_{\text{cv}}}
\newcommand{\optimalHorizonCvAll}{\optimalHorizon_{\text{oracle}}}

\usepackage{xspace}
\makeatletter
\DeclareRobustCommand\onedot{\futurelet\@let@token\@onedot}
\def\@onedot{\ifx\@let@token.\else.\null\fi\xspace}

\def\ie{\emph{i.e}\onedot}

\makeatother



\makeatletter
\gdef\@copyrightpermission{
  \begin{minipage}{0.2\columnwidth}
   \href{https://creativecommons.org/licenses/by/4.0/}{\includegraphics[width=0.90\textwidth]{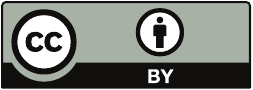}}
  \end{minipage}\hfill
  \begin{minipage}{0.8\columnwidth}
   \href{https://creativecommons.org/licenses/by/4.0/}{This work is licensed under a Creative Commons Attribution International 4.0 License.}
  \end{minipage}
  \vspace{5pt}
}
\makeatother

\setcopyright{ifaamas}
\acmConference[AAMAS '25]{Proc.\@ of the 24th International Conference
on Autonomous Agents and Multiagent Systems (AAMAS 2025)}{May 19 -- 23, 2025}
{Detroit, Michigan, USA}{Y.~Vorobeychik, S.~Das, A.~Nowé  (eds.)}
\copyrightyear{2025}
\acmYear{2025}
\acmDOI{}
\acmPrice{}
\acmISBN{}



\acmSubmissionID{<<OpenReview submission id>>}


\title[AAMAS-2025 Formatting Instructions]{On the Effective Horizon of Inverse Reinforcement Learning}



\author{Yiqing Xu}
\affiliation{
  \institution{National University of Singapore}
  \city{School of Computing}
  \country{Singapore}}
\email{xuyiqing@comp.nus.edu.sg}

\author{Finale Doshi-Velez}
\affiliation{
  \institution{Harvard University}
  \city{Department of Computer Science}
  \country{Cambridge, USA}}
\email{finale@seas.harvard.edu}

\author{David Hsu}
\affiliation{
  \institution{National University of Singapore}
  \city{School of Computing, Smart System Institute}
  \country{Singapore}}
\email{dyhsu@comp.nus.edu.sg}


\begin{abstract}
Inverse reinforcement learning (IRL) algorithms often rely on (forward) reinforcement learning or planning, over a given time horizon, to compute an approximately optimal policy for a hypothesized reward function; they then match this policy with expert demonstrations. The time horizon plays a critical role in determining both the accuracy of reward estimates and the computational efficiency of IRL algorithms. Interestingly, an \emph{effective time horizon} shorter than the ground-truth value often produces better results faster. This work formally analyzes this phenomenon and provides an explanation: the time horizon controls the complexity of an induced policy class and mitigates overfitting with limited data. This analysis provides a guide for the principled choice of the effective horizon for IRL. It also prompts us to re-examine the classic IRL formulation: it is more natural to learn jointly the reward and the effective horizon  rather than the reward alone with a given horizon. To validate our findings, we implement a cross-validation extension and the experimental results support the theoretical analysis. 
The \href{https://effective-horizon-irl.github.io/}{\textcolor{blue}{project page}} 
and \href{https://github.com/eeching/Effective_Horizon_IRL}{\textcolor{blue}{code}} 
are publicly available.
\end{abstract}



\keywords{Inverse Reinforcement Learning; Imitation Learning}


         
\newcommand{\BibTeX}{\rm B\kern-.05em{\sc i\kern-.025em b}\kern-.08em\TeX}


\begin{document}


\pagestyle{fancy}
\fancyhead{}


\maketitle 

\section{Introduction}
\label{sec:introduction}

Inverse reinforcement learning (IRL)  \citep{AndrewNg} aims to infer the underlying task objective from expert demonstrations. One common approach is to estimate a reward function that induces a policy matching closely the demonstrated data. 
This model-based approach holds the promise of generalizing the learned reward function and the associated policy over states not seen in the demonstrations \citep{IM_book}. 

Many existing IRL algorithms follow the classic formulation and assume a known ground-truth discount factor (or equivalently, time horizon) for the expert demonstrations \citep{AndrewNg, Apprenticeship, BIRL, maxentirl, REIRL, GPIRL, DeepMaxEnt, IRL_PG, GCL, GAN-GCL, GAIL, AIRL, fIRL, f_D, IRL_policy_based_learner, pmlr-v139-metelli21a, opirl}. They estimate the reward function based on this specified time horizon. However, we do have the flexibility to choose a different time horizon when learning the reward function and optimizing the policy.

Surprisingly, we find that using a \emph{horizon} shorter than the ground-truth value often produces better results faster, especially when expert data is scarce. Why? Intuitively, the time horizon controls the complexity of an induced policy class. With limited data, a shorter time horizon is preferred, as the induced policy class is simpler and mitigates overfitting. We refer to the horizon or discount factor used during learning as the \emph{effective horizon} or \emph{effective discount factor}, as it holds promise to enhance learning effectiveness.

We present a formal analysis showing that, with limited expert data, using a shorter discount factor or horizon improves the generalization of the learned reward function to unseen states. In IRL settings where the effective horizon varies, the performance gap between the induced policy and the expert policy arises from two sources: (i) reward estimation error due to limited data during IRL, and (ii) policy optimization error from using an effective horizon shorter than the ground-truth. We prove that the effective horizon controls the complexity of the approximated policy class during IRL. As the horizon increases, reward estimation error grows—overfitting occurs because we estimate policies from a more complex class using limited expert data. On the other hand, as the horizon approaches the ground-truth, the policy optimization error decreases. These opposing errors suggest that an intermediate effective horizon balances this trade-off and produces the most expert-like policy.

Based on our theoretical findings, we propose a more natural and higher-performing formulation of the IRL problem: jointly learning the reward function and the effective horizon/discount factor. To validate this approach, we extend the linear programming IRL algorithm (LP-IRL) \citep{AndrewNg} and the maximum entropy IRL algorithm (MaxEnt-IRL) \citep{maxentirl}, using cross-validation to simultaneously learn the reward function and the effective horizon. Our experimental evaluation of these extended algorithms across four different tasks corroborates our theoretical analysis.

 This work presents the first formal analysis of the relationship between the horizon and the performance of the learned reward function in IRL. Previous studies have examined the impact of the horizon in Reinforcement Learning (RL) \citep{guo2022theoretical, hu2022role, laidlaw2023bridging, amit2020discount}, approximate dynamic programming \citep{petrik2008biasing}, and planning \citep{gamma, mattingley2011receding}, but the effects when the reward function is unknown and inferred from data remain under-explored. Our work addresses this gap by providing a theoretical analysis on how changes in the horizon affect policy performance in IRL. 
 Some IRL algorithms employ smaller effective time horizons for computational efficiency \citep{BetweenImitationAndIntention, CostMPC, RHIRL}, while others learn discount factors from data to better align with demonstrations, sometimes incorporating multiple feedback types \citep{giwa2021estimation, ghosal2023effect}. Our theoretical analysis on the role of varying time horizons complements existing work and offers valuable insights to guide future IRL research.
\section{Related works}
\label{sec:related_work}

\paragraph{Effective Horizon of Imitation Learning}
Imitation learning learns desired behaviors by imitating expert data and comprises two classes of methods: model-free behavior cloning (BC) and model-based inverse reinforcement learning (IRL) \citep{IM_book}. The primary distinction between BC and IRL lies in the horizons used to align the learned behaviors with expert data. BC matches step-wise expert actions, resulting in poor generalization to unseen states. 
In contrast, IRL addresses this issue by either matching multi-step trajectory distributions \citep{maxentirl, REIRL, GPIRL, DeepMaxEnt, GCL, GAN-GCL, IRL_PG, IRL_policy_based_learner} or their marginalized approximations \citep{GAIL, AIRL, fIRL, f_D, FAIRL, opirl}. The former employs a double-loop structure to interleave the policy optimization and reward function update, while the latter learns a discriminator to distinguish expert-like behaviors. Both approaches utilize the ground-truth horizon/discount factor for policy optimization, ensuring global temporal consistency between the learned policy and expert. Notably, few IRL methods adopt receding horizons to reduce computational cost\citep{BetweenImitationAndIntention,CostMPC, RHIRL}, claiming shorter optimization horizons yields sub-optimal policies. Moreover, several works focus on finding the optimal discount factors in an IRL context \citep{ghosal2023effect, giwa2021estimation}. However, the absence of a theoretical analysis on the horizon's impact in IRL leaves an important yet overlooked gap in the field.

\paragraph{Theoretical Analysis on Effective Horizon}

Several studies have examined how planning horizons affect Reinforcement Learning (RL)~\citep{guo2022theoretical, hu2022role, laidlaw2023bridging, amit2020discount}, approximate dynamic programming~\citep{petrik2008biasing}, and planning~\citep{mattingley2011receding}, particularly when transition models are estimated from samples~\citep{gamma}. In these works, which assume a known reward function, differences in planning horizons directly affect policy performance. However, the effect of horizons remains unexplored when the reward function is learned from data. Our study addresses this gap by investigating how horizon changes affect policy performance when the reward function is derived from expert demonstrations. Under this setting, policy optimization depends on both the horizon and the horizon-dependent reward function estimation.

This dual dependency introduces unique analytical challenges not present in forward RL or planning. The key challenge lies in examining how varying discount factors influence reward estimates when expert data is limited. While \citet{pmlr-v139-metelli21a} define a set of reward functions that remain compatible with limited expert demonstrations under a fixed, known horizon, we extend this formulation to accommodate unknown and varying horizons. By considering this extended class of feasible reward functions, we analyze how horizon variations affect reward estimation and subsequent policy optimization, particularly with scarce expert data.

\section{Problem formulation}
\label{sec:problem_formulation}

 We consider an MDP $(\stateSpace, \actionSpace, \transition, \gtR, \gtGamma )$, where $\stateSpace$ and $\actionSpace$ represent the state and action spaces, respectively. The transition function is denoted by $\transition: \stateSpace \times \actionSpace \times \stateSpace \to [0, 1]$, and the ground-truth reward function is $\gtR: \stateSpace \times \actionSpace \to [0, R_{max}]$. The discount factor, $\gtGamma$, implicitly determines the value of future rewards at the current time step. The optimal policy, $\optimalPolicy{\gtR}{\gtGamma}$, maximizes the total discounted reward based on $\gtR$ and $\gtGamma$. In our setting, we are given the MDP without the reward function $\gtR$ or the discount factor $\gtGamma$.  
Instead, we have a set of $N$ expert demonstrated state-action pairs $\expertDemo = \{(s_0,a_0), (s_1, a_1), ... (s_{N-1}, a_{N-1})\}$ sampled from a deterministic policy $\optimalPolicy{\gtR}{\gtGamma}$. Our analysis employs the discount factor for simplicity; however, the findings are equally applicable to the planning horizon since both determine how much future rewards are valued. A smaller discount factor effectively limits the agent's planning horizon by diminishing the importance of distant rewards.

We examine the scenario where both the reward function and discount factor $( \learnedR, \learnedGamma )$ are jointly learned from the limited expert demonstrations. The scarcity of data suggests that $(\learnedR, \learnedGamma)$ is susceptible to approximation errors, which consequently affects the induced optimal policy $\optimalPolicy{\learnedR}{\learnedGamma}$.
The quality of the pair \((\learnedR, \learnedGamma)\) is assessed by comparing the value of its induced policy \(\optimalPolicy{\learnedR}{\learnedGamma}\) with the ground-truth optimal policy \(\optimalPolicy{\gtR}{\gtGamma}\), both evaluated under the true \((\gtR, \gtGamma)\) for fairness. We define the loss as \(\left\lVert \valueFunction{\optimalPolicy{\gtR}{\gtGamma}}{\gtR}{\gtGamma} - \valueFunction{\optimalPolicy{\learnedR}{\learnedGamma}}{\gtR}{\gtGamma} \right\rVert_{\infty}\), where \(\valueFunction{\pi}{R}{\gamma}\) is the value function of \(\pi\) under \((R, \gamma)\). The ``best" policy $\optimalPolicy{\learnedR}{\learnedGamma}$ is the one that minimizes this loss. During learning, each choice of \(\learnedGamma\) induces a different policy class \(\Pi_{\learnedGamma}\) with its own complexity, as defined below.

\begin{definition}[Policy Class and Complexity Measure] \label{def:complexity}
For a given \(\learnedGamma\), the policy class \(\Pi_{\learnedGamma}\) comprises all optimal policies for a fixed state space \(\stateSpace\), action space \(\actionSpace\), and transition function \(\transition\), for any reward function \(\cR \in \mathcal{F}_{\cR}\) satisfying the following mild condition. Formally,
\[
\Pi_{\learnedGamma} 
= 
\bigl\{ 
    \pi 
    \;\bigm|\;
    \exists \cR \in \mathcal{F}_{\cR}
    : 
    \pi \text{ is optimal in } 
    (\stateSpace, \actionSpace, \transition, \cR, \learnedGamma )
\bigr\},
\]
where \(\mathcal{F}_{\cR}\) is the set of reward functions such that, for each state \(s\in S\), there is exactly one action \(a^*(s)\) for which \(\cR(s, a^*(s))\) strictly surpasses \(\cR(s, a)\) for all \(a \neq a^*(s)\). The complexity of this policy class is measured by \(|\Pi_{\learnedGamma}|\), the number of distinct optimal policies.
\end{definition}

For a given discount factor $\learnedGamma$, IRL algorithms learn a reward function $\learnedR$ from limited expert data ($N$). This reward function yields an optimal policy $\optimalPolicy{\learnedR}{\learnedGamma}$ from the policy set $\Pi_{\learnedGamma}$ that closely matches the expert data. The complexity of \(\Pi_{\learnedGamma}\), measured by \(\lvert \Pi_{\learnedGamma}\rvert\), influences the degree of overfitting in this low-data regime, making it a key factor in IRL. The optimal effective discount factor \(\optimalGamma\) is defined as the one whose induced optimal policy minimizes the loss:
\begin{equation}
     \optimalGamma 
     = \underset{0 \leq \learnedGamma \leq \gtGamma}{\arg\min}
       \Bigl\lVert 
         \valueFunction{\optimalPolicy{\gtR}{\gtGamma}}{\gtR}{\gtGamma} 
         - \valueFunction{\optimalPolicy{\learnedR}{\learnedGamma}}{\gtR}{\gtGamma}
       \Bigr\rVert_{\infty}.
     \label{equ:optimal_gamma}
\end{equation}

We use this loss to examine the interplay among the amount of expert data, the discount factor, and the resulting policy’s performance. In particular, we investigate how to choose \(\learnedGamma \leq \gtGamma\) to minimize this loss.
\section{Analysis}
\label{sec:main_result}

\subsection{Overview}
We define the \emph{effective horizon} as the horizon (or discount factor) used in learning the reward function and optimizing policies, which may differ from the ground-truth horizon. We formally analyze how this effective horizon influences the quality of the learned reward function under varying amounts of expert data. Our main result, Theorem \ref{theorem:final_bound}, shows that when expert data is limited, using a discount factor smaller than the ground-truth value enables IRL methods to learn reward functions that induce policies more closely aligned with the expert.

\begin{theorem}\label{theorem:final_bound}
Let $(\stateSpace, \actionSpace, \transition)$ be a controlled Markov process shared by two MDPs: the ground-truth MDP $(\stateSpace, \actionSpace, \transition, \gtR, \gtGamma)$ with reward function $\gtR: \stateSpace \times \actionSpace \rightarrow [0, R_{\max}]$ and discount factor $\gtGamma \in (0, 1)$; and the estimated MDP $(\stateSpace, \actionSpace, \transition, \learnedR, \learnedGamma)$ with reward function $\learnedR: \stateSpace \times \actionSpace \rightarrow [0, R_{\max}]$ and discount factor $\learnedGamma \in (0, 1)$, estimated from $N$ expert state-action pairs. Let $|\Pi_{\learnedGamma}|$ denote the complexity of the policy class induced by the estimated effective horizon $\learnedGamma$, and suppose the optimal policy $\optimalPolicy{\learnedR}{\learnedGamma}$ derived from $(\learnedR, \learnedGamma)$ belongs to this class, i.e., $\optimalPolicy{\learnedR}{\learnedGamma} \in \Pi_{\learnedGamma}$. Then, for the optimal policies $\optimalPolicy{\gtR}{\gtGamma}$ and $\optimalPolicy{\learnedR}{\learnedGamma}$ induced by the ground-truth and estimated parameters, respectively, the difference between their value functions evaluated under $\gtR$ and $\gtGamma$ is bounded with probability at least $1 - \delta$ by
\begin{align}   \left\lVert\VFunction{\optimalPolicy{\gtR}{ \gtGamma}}{\gtR}{\gtGamma} - \VFunction{\optimalPolicy{\learnedR}{ \learnedGamma}}{\gtR}{\gtGamma}\right\rVert_{\infty}  \leq   & \frac{2R_{\max}}{(1-\learnedGamma)^2}\sqrt{\frac{1}{2N}\log \frac{|\stateSpace||\Pi_{\learnedGamma}|}{2\delta}} \nonumber \\
& + \frac{\gtGamma -  \learnedGamma}{(1-\gtGamma)(1-\learnedGamma)}R_{max}.
     \label{equ:main-result-bound}
\end{align}
\end{theorem}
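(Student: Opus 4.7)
The plan is to decompose the value gap into a horizon-mismatch bias and a policy-mismatch term by telescoping through two hybrid value functions, then bound each piece with a different tool. Concretely, I insert $\VFunction{\optimalPolicy{\gtR}{\gtGamma}}{\gtR}{\learnedGamma}$ and $\VFunction{\optimalPolicy{\learnedR}{\learnedGamma}}{\gtR}{\learnedGamma}$ so that, pointwise at every $s$, $\VFunction{\optimalPolicy{\gtR}{\gtGamma}}{\gtR}{\gtGamma}(s) - \VFunction{\optimalPolicy{\learnedR}{\learnedGamma}}{\gtR}{\gtGamma}(s)$ splits into three pieces: two outer terms that hold a policy fixed and change only $\gamma$, and an inner term $\VFunction{\optimalPolicy{\gtR}{\gtGamma}}{\gtR}{\learnedGamma} - \VFunction{\optimalPolicy{\learnedR}{\learnedGamma}}{\gtR}{\learnedGamma}$ comparing the two policies entirely under $(\gtR,\learnedGamma)$. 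Optimality of $\optimalPolicy{\gtR}{\gtGamma}$ under $(\gtR,\gtGamma)$ makes the left-hand side non-negative pointwise, so it is enough to upper-bound each state before taking $\|\cdot\|_\infty$.

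The outer terms are handled by a direct geometric-series argument: for any fixed $\pi$ with $\gtR \in [0,R_{\max}]$, $|\VFunction{\pi}{\gtR}{\gtGamma}(s) - \VFunction{\pi}{\gtR}{\learnedGamma}(s)| \leq R_{\max}|\tfrac{1}{1-\gtGamma}-\tfrac{1}{1-\learnedGamma}| = \tfrac{\gtGamma-\learnedGamma}{(1-\gtGamma)(1-\learnedGamma)}R_{\max}$. Since $\learnedGamma \leq \gtGamma$ and rewards are non-negative, the outer term at the expert is non-negative while the outer term at the learned policy is non-positive, so they combine---without doubling---into exactly the bias summand of \eqref{equ:main-result-bound}. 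For the inner term, both $\optimalPolicy{\gtR}{\gtGamma}$ and $\optimalPolicy{\learnedR}{\learnedGamma}$ live in $\policyComplexity{\learnedGamma}$, and IRL chooses $\optimalPolicy{\learnedR}{\learnedGamma}$ from this class to best fit the $N$ expert state-action pairs (the expert itself has zero empirical disagreement, so the IRL minimizer does too). A simulation/performance-difference step then converts a per-state disagreement probability $\Pr_s[\pi(s)\neq\expertPolicy(s)]$ into a value gap under $(\gtR,\learnedGamma)$ bounded by a multiple of $R_{\max}/(1-\learnedGamma)^2$---one $1/(1-\learnedGamma)$ from the value-function range and a second from integrating per-step disagreement along the discounted occupancy---while Hoeffding's inequality together with a union bound over the $|\stateSpace|\cdot|\policyComplexity{\learnedGamma}|$ (state, candidate policy) pairs inflates the empirical rate to the population rate at cost $\sqrt{\tfrac{1}{2N}\log(|\stateSpace||\policyComplexity{\learnedGamma}|/(2\delta))}$.

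The main obstacle will be the concentration step. Because $\optimalPolicy{\learnedR}{\learnedGamma}$ is data-dependent, the uniform bound has to be taken over all of $\policyComplexity{\learnedGamma}$---which is data-independent once $\learnedGamma$ is fixed---and only then instantiated at the random learned policy. The appearance of $|\stateSpace|$ (rather than $|\actionSpace|$ or a covering number) inside the logarithm suggests the pointwise-to-$\ell_\infty$ inflation enters through a state-indexed second coordinate of the union bound, not through a cover of $\policyComplexity{\learnedGamma}$ alone; lining up this accounting with the performance-difference conversion so that the prefactor $2R_{\max}/(1-\learnedGamma)^2$ and the log argument emerge exactly as in \eqref{equ:main-result-bound} is the delicate bookkeeping I would expect to spend the most effort on.
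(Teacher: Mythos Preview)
Your telescoping and your treatment of the horizon-bias piece match the paper (Lemma~\ref{lemma1} plus dropping the non-positive outer term is exactly how Theorem~\ref{theorem:value_bound} arrives at $\tfrac{\gtGamma-\learnedGamma}{(1-\gtGamma)(1-\learnedGamma)}R_{\max}$). The divergence is entirely in the inner term. The paper does \emph{not} argue via policy-level ERM and a performance-difference conversion. It first upper-bounds the inner term by $\tfrac{2}{1-\learnedGamma}\lVert\gtR-\learnedR\rVert_\infty$ (Lemmas~\ref{lemma2}--\ref{lemma3}) and then spends the bulk of the proof controlling the \emph{reward} error: Lemma~\ref{lemma:feasible_reward} characterizes feasible rewards explicitly as $-\expertFilterC{\expertPolicy}\zeta+(E-\learnedGamma\transition)V$; Theorem~\ref{theorem:reward_expert} deduces $|\gtR-\learnedR|\le\expertFilterC{\expertPolicy}\expertFilter{\approxExpertPolicy}\zeta$ with $\lVert\zeta\rVert_\infty\le R_{\max}/(1-\learnedGamma)$; and McDiarmid (not Hoeffding) applied to this filter quantity, together with a union bound over $|\stateSpace|\cdot|\Pi_{\learnedGamma}|$, produces the $\sqrt{\tfrac{1}{2N}\log(|\stateSpace||\Pi_{\learnedGamma}|/\delta)}$ factor. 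Your plan bypasses all of this feasible-reward-set machinery, which is the paper's main technical device.

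That alternative route also has a concrete gap. You assert that both $\optimalPolicy{\gtR}{\gtGamma}$ and $\optimalPolicy{\learnedR}{\learnedGamma}$ live in $\Pi_{\learnedGamma}$, but Theorem~\ref{theorem:complexity_measure} establishes the opposite inclusion: $\Pi_{\learnedGamma}\subseteq\Pi_{\gtGamma}$ for $\learnedGamma<\gtGamma$, so the expert $\optimalPolicy{\gtR}{\gtGamma}\in\Pi_{\gtGamma}$ need not belong to the \emph{smaller} class $\Pi_{\learnedGamma}$. Without this realizability, ``the expert has zero empirical disagreement, so the IRL minimizer does too'' fails, and your Hoeffding step no longer controls the population disagreement of $\optimalPolicy{\learnedR}{\learnedGamma}$ without an additional approximation-error term that does not appear in \eqref{equ:main-result-bound}. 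Separately, the paper does not model IRL as ERM over policies: $\learnedR$ is any reward in the feasible set for an \emph{estimated} expert policy $\approxExpertPolicy$, and the sample-to-error link is through the operators $\expertFilter{\cdot},\expertFilterC{\cdot}$, not an empirical $0$--$1$ loss. The feasible-set route is precisely what lets the reward-error step go through without assuming $\expertPolicy\in\Pi_{\learnedGamma}$.
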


Intuitively, Theorem \ref{theorem:final_bound} bounds the performance disparity between the policy induced by the learned $(\learnedR, \learnedGamma)$ and the expert policy as a sum of two terms: the first term bounds the \emph{reward function estimation error} that arises from using a limited expert data and an effective $\learnedGamma$ during IRL, while the second term measures the \emph{policy performance gap} when optimized using $\learnedGamma < \gtGamma$ using that estimated reward function. When $\learnedGamma$ increases, the first term, \ie, the \emph{reward function estimation error}, grows due to overfitting arising from estimating a policy from an increasingly complex class $\Pi_{\learnedGamma}$ using only limited expert data, while the second term, \ie, \emph{the policy optimization error}, diminishes to encourage fidelity to the ground-truth $\gtGamma$ and approaches $0$ when $\learnedGamma \to \gtGamma$. Consequently, these opposing error terms imply that an intermediate value of $\learnedGamma$ yields a better reward function that induces the most expert-like policy.

We build up intermediate theorems and lemmas to formally prove Theorem \ref{theorem:final_bound} in the remaining subsections.  The overall strategy is the following: The overall bound in Theorem~\ref{theorem:final_bound} measures the performance gap between the policy induced by the learned $(\learnedR, \learnedGamma)$ pair and the optimal policy. This gap arises from two sources: (i) differences in the reward functions, and (ii) differences in the horizons over which the policies are optimized. Therefore, we decompose the overall bound into two error terms: (i) the \emph{reward function estimation error} resulting from limited expert data during IRL, and (ii) the \emph{policy optimization error} due to using an effective horizon different from the ground truth. 

After deriving bounds for both error terms, we combine them to prove Theorem~\ref{theorem:final_bound} in Section~\ref{sec:value_error}. The second term is straightforward to bound (see Section~\ref{sec:value_error}), The main challenge lies in bounding the reward function estimation error in the first term (Sections~\ref{sec:complexity_measure} to~\ref{sec:expert_policy_estimation_error}). 
We summarize our strategy on bounding the first error term—reward function estimation error—in terms of the effective horizon \(\learnedGamma\) and the number of expert state-action pairs \(N\) below. 

IRL methods learn the reward function by minimizing the discrepancy between the induced policy and the expert data. Therefore, we bound the reward function estimation error using the \emph{expert policy estimation error}, which measures the gap between the policy induced by the learned reward function and the ground-truth expert policy (see Sections~\ref{sec:feasible_set} and~\ref{sec:reward_error}). To do this, we first establish a correspondence between the feasible reward function set and an expert policy (Lemma~\ref{lemma:feasible_reward} in Section~\ref{sec:feasible_set}). Then, we use this correspondence to bound the error in reward function estimation by the expert policy estimation error (Theorem~\ref{theorem:reward_expert} in Section~\ref{sec:reward_error}). Next, we show that, this expert policy estimation error, in turn, depends on the complexity of the policy class \(|\Pi_{\learnedGamma}|\) and the number of expert pairs \(N\) used to fit it (see Section~\ref{sec:expert_policy_estimation_error}). Furthermore, we prove that the policy class complexity \(|\Pi_{\learnedGamma}|\) is controlled by the effective horizon \(\learnedGamma\): as \(\learnedGamma\) increases, the complexity of the policy class \(\Pi_{\learnedGamma}\) rises (Theorem~\ref{theorem:complexity_measure} in Section~\ref{sec:complexity_measure}). By combining these results, we derive the bound for the reward function estimation error in terms of \(\learnedGamma\) and \(N\) in Section~\ref{sec:expert_policy_estimation_error}. This completes the proof.

In the remainder of this section, we provide the detailed proofs, following the structure outlined above.

\subsection{Feasible Reward Function Set}
\label{sec:feasible_set}
In this section, we establish a correspondence between the given expert demonstrations and the \emph{feasible reward function set}—the set of all pairs of reward functions and horizons consistent with the expert data. This correspondence is essential for later bounding the reward function estimation error by the expert policy estimation error.

To create an algorithm-agnostic mapping from the fixed set of expert data and the effective horizon to the learned reward functions, we extend Metelli et al.’s \citep{pmlr-v139-metelli21a} definition of feasible reward function sets—originally formulated for known discount factors—to variable and unknown effective discount factors. We begin by implicitly defining the feasible reward function set based on the foundational IRL formulation  \citep{AndrewNg}, which includes all pairs of reward-horizon whose induced policies match the expert data. From this implicit definition, we derive an explicit characterization of the feasible reward function set as a function of the expert policy.

We start by first implicitly defining the feasible reward set based on the IRL definition \cite{AndrewNg}, adapting for the varying discount factors.
\begin{definition}[IRL Problem, adapted to the setting of varying discount factors]
Let $\partialMDP = ( \stateSpace, \actionSpace, \transition )$ be the MDP without the reward function or discount factor. An IRL problem, denoted as $\IRL = (\partialMDP, \expertPolicy)$, consists of the MDP and an expert's policy $\expertPolicy$. A reward $\learnedR \in \mathbb{R}^{\stateSpace\times \actionSpace}$ is feasible for $\Re$ if there exists a $\learnedGamma$ such that $\expertPolicy$ is optimal for the MDP $\partialMDP \cup ( \learnedR, \learnedGamma )$, \ie, $\expertPolicy \in \Pi^*_{\learnedR, \learnedGamma}$. We use $\feasibleSet{\IRL}$ to denote the set of feasible rewards for $\IRL$.
\end{definition}

This IRL formulation implies an implicit correspondence between the expert policy $\expertPolicy$ and a \emph{feasible} $(\learnedR, \learnedGamma)$ pair through their advantage function $A^{\expertPolicy}_{\learnedR, \learnedGamma}(s, a) = \QFunction{\expertPolicy}{\learnedR}{\learnedGamma}(s, a) - \VFunction{\expertPolicy}{\learnedR}{\learnedGamma}(s)$. Specifically, the optimal policy induced by $(\learnedR, \learnedGamma)$ matches $\expertPolicy$ when the following two conditions on the advantage function are met:
\begin{enumerate}
    \item if $\expertPolicy(a|s) > 0$, then $A^{\expertPolicy}_{\learnedR, \learnedGamma}(s, a) = 0$,
    \item if $\expertPolicy(a|s) = 0$, then $A^{\expertPolicy}_{\learnedR, \learnedGamma}(s, a) \leq 0$. 
\end{enumerate}
The first condition ensures the expert's chosen actions have zero advantage, while the second guarantees unchosen actions have non-positive advantages.

Next, we establish an explicit correspondence between the estimated $(\learnedR, \learnedGamma)$ and their compatible expert policy $\expertPolicy$ by enforcing these two conditions on the advantage function $A^{\expertPolicy}_{\learnedR, \learnedGamma}$. To do this, we introduce two operators for any given policy $\pi$:
\begin{enumerate}
    \item the \emph{expert-filter}: $(\expertFilter{\pi} A)(s, a) = A(s, a)\mathds{1}\{ \pi(a|s) > 0\}$, that retains the advantage $A(s, a)$ values for actions taken by the expert policy $\expertPolicy(a|s)$,
    \item the \emph{expert-filter-complement}: $(\expertFilterC{\pi} A)(s, a) = A(s, a)\mathds{1}\{\pi(a|s) = 0\}$, that preserves the advantage values for actions \textbf{not} taken by the expert.
\end{enumerate}
Using the Bellman equation, we express the advantage functions in terms of the estimated $(\learnedR, \learnedGamma)$. We then apply the two filters to the advantage function $A^{\expertPolicy}_{\learnedR, \learnedGamma}$ to enforce the optimality conditions, simplifying the expression in the process. This allows us to derive the explicit expression for the feasible reward set presented in Lemma \ref{lemma:feasible_reward}. The detailed derivation is provided in Appendix \ref{appendix:proof_feasible_reward}.

\begin{lemma}[Feasible Reward Function Set, extended from \citet{pmlr-v139-metelli21a}] \label{lemma:feasible_reward}
Let  $\IRL = (\partialMDP, \expertPolicy)$ be an IRL problem. Let $\learnedR \in \mathbb{R}^{\stateSpace \times \actionSpace}$ and $0 < \learnedGamma <1$, then $\learnedR$ is a feasible reward, \ie, $\learnedR \in \feasibleSet{\IRL}$ if and only if there exists $\zeta \in \mathbb{R}^{\stateSpace \times \actionSpace}_{\geq 0}$ and $V \in \mathbb{R}^{\stateSpace}$ such that:
\begin{equation}
    \learnedR = - \expertFilterC{\expertPolicy}\zeta + 
    (E - \learnedGamma \transition)V,
\end{equation}
whereas $E: \mathbb{R}^{|\stateSpace|} \to  \mathbb{R}^{|\stateSpace| \times |\actionSpace|}$ is an operator that marginalizes the action for each state on a function $f(\cdot)$ such that $ (Ef)(s, a) = f(s)$.
\end{lemma}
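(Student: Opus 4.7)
The plan is to exploit the two optimality conditions on the advantage function $A^{\expertPolicy}_{\learnedR, \learnedGamma}$ that the paper has already identified as equivalent to $\expertPolicy$ being optimal under $(\learnedR, \learnedGamma)$: vanishing on the support of $\expertPolicy$ and non-positivity off-support. Writing the advantage in Bellman form,
\begin{equation*}
A^{\expertPolicy}_{\learnedR, \learnedGamma} \;=\; \learnedR - (E - \learnedGamma\,\transition)\,\VFunction{\expertPolicy}{\learnedR}{\learnedGamma},
\end{equation*}
we see that every feasible $\learnedR$ decomposes into an ``advantage part'' and a ``Bellman part,'' and the job of the lemma is to encode the two sign conditions through the filters $\expertFilter{\expertPolicy}$ and $\expertFilterC{\expertPolicy}$.

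For the forward direction, assuming $\learnedR \in \feasibleSet{\IRL}$, I would set $V := \VFunction{\expertPolicy}{\learnedR}{\learnedGamma}$ and $\zeta := -\expertFilterC{\expertPolicy} A^{\expertPolicy}_{\learnedR, \learnedGamma}$. The non-positivity condition gives $\zeta \geq 0$, and combined with the vanishing condition it yields $A^{\expertPolicy}_{\learnedR, \learnedGamma} = \expertFilter{\expertPolicy} A^{\expertPolicy}_{\learnedR, \learnedGamma} + \expertFilterC{\expertPolicy} A^{\expertPolicy}_{\learnedR, \learnedGamma} = -\expertFilterC{\expertPolicy}\zeta$, using that $\expertFilterC{\expertPolicy}$ is an idempotent projection. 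Substituting this back into the Bellman identity for the advantage gives $\learnedR = -\expertFilterC{\expertPolicy}\zeta + (E - \learnedGamma \transition)V$, which is the desired form.

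For the converse, suppose $\learnedR = -\expertFilterC{\expertPolicy}\zeta + (E - \learnedGamma\transition)V$ for some $\zeta \geq 0$ and $V \in \mathbb{R}^{\stateSpace}$. The key observation is that $\expertFilterC{\expertPolicy}\zeta$ is supported only on state-action pairs with $\expertPolicy(a|s) = 0$, so taking expectation against $\expertPolicy(\cdot \mid s)$ kills that term and leaves exactly the Bellman fixed-point equation for $\VFunction{\expertPolicy}{\learnedR}{\learnedGamma}$. Since $0 < \learnedGamma < 1$ guarantees uniqueness of the fixed point, the $V$ appearing in the decomposition must coincide with $\VFunction{\expertPolicy}{\learnedR}{\learnedGamma}$. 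Plugging this back in gives $A^{\expertPolicy}_{\learnedR, \learnedGamma} = -\expertFilterC{\expertPolicy}\zeta$, which vanishes on the support of $\expertPolicy$ and is non-positive elsewhere, so $\expertPolicy$ is optimal and $\learnedR \in \feasibleSet{\IRL}$.

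The main obstacle I anticipate is the bookkeeping in the converse: one must verify that the $V$ supplied by the decomposition is the actual expert value function rather than an arbitrary vector, and this relies crucially on the vanishing of $\expertFilterC{\expertPolicy}\zeta$ under expert averaging together with the contraction property ensured by $\learnedGamma \in (0, 1)$. Once this identification is made, both sign conditions on $A^{\expertPolicy}_{\learnedR, \learnedGamma}$ follow mechanically from the indicator structure of the filters and from $\zeta \geq 0$.
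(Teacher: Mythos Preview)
Your proposal is correct and follows essentially the same approach as the paper. The only difference is organizational: the paper first characterizes the admissible $Q$-functions as $Q = -\expertFilterC{\expertPolicy}\zeta + EV$ and then applies the bijection $\learnedR = (I_{\stateSpace\times\actionSpace} - \learnedGamma\transition\expertPolicy)Q$ to obtain the reward form, whereas you work directly with the advantage identity $A^{\expertPolicy}_{\learnedR,\learnedGamma} = \learnedR - (E-\learnedGamma\transition)\VFunction{\expertPolicy}{\learnedR}{\learnedGamma}$ and, in the converse, identify $V$ with $\VFunction{\expertPolicy}{\learnedR}{\learnedGamma}$ via the Bellman fixed point; both routes hinge on the same identities $\expertPolicy\expertFilterC{\expertPolicy} = \mathbf{0}_{\stateSpace}$, $\expertPolicy E = I_{\stateSpace}$, and invertibility of the Bellman operator for $\learnedGamma\in(0,1)$.
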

The feasible reward function in Lemma \ref{lemma:feasible_reward} comprises two terms: the first term depends on the expert policy $\expertPolicy$ and the second term depends on the underlying MDP's transition function. The first term, $-\expertFilterC{\expertPolicy}\zeta$, is derived using the \emph{expert-filter-complement} on a non-negative function $\zeta$. This ensures that actions taken by the expert (i.e., $\expertPolicy(a|s) > 0$) are assigned a value of zero, while actions not taken (i.e., $\expertPolicy(a|s) = 0$) have non-positive values. The second term represents the policy's temporal effect that relies on the MDP's transition function. This can be viewed as reward shaping through the value function, which preserves the expert policy's optimality. 

We have thus established an explicit correspondence between the feasible $(\learnedR, \learnedGamma)$ and the expert policy $\expertPolicy$ in Lemma \ref{lemma:feasible_reward}. This explicit expression will later be used to bound the difference in reward functions by the expert policy estimation error.

\subsection{Reward Function Estimation Error from Expert Policy Estimation Error}
\label{sec:reward_error} 

In this section, we establish a bound on the reward function estimation error in terms of the \emph{expert policy estimation error}—the discrepancy between the estimated expert policy (from limited data) and the true expert policy. Since IRL methods learn the reward function by matching the induced policy to expert data, establishing this bound is crucial, as it allows us to relate the reward estimation error to the amount of expert data and the induced policy complexity in later analysis.

Let's consider two IRL problems, $\IRL = (\partialMDP, \expertPolicy)$ and $\approxIRL = (\partialMDP, \approxExpertPolicy)$, which differ only in expert policies: $\IRL$ utilizes the ground-truth expert policy, while $\approxIRL$ employs an estimated policy from samples. Since an IRL algorithm aligns its induced policy with the estimated expert policy, its feasible sets will be equivalent to that of the estimated expert policy. Intuitively, inaccuracies in estimating the expert policy $\expertPolicy$ lead to errors in estimating the feasible sets $\feasibleSet{\IRL}$. Our goal is to obtain a reward function $\learnedR$ with a feasible set ``close'' to the ground-truth $\gtR$'s feasible set. Specifically, ``closeness'' is determined by the distance between the nearest reward functions in each set. The estimated $\feasibleSet{\approxIRL}$ is considered close to the exact $\feasibleSet{\IRL}$ if, for every reward $\gtR \in \feasibleSet{\IRL}$, there exists an estimated reward $\learnedR \in \feasibleSet{\approxIRL}$ with a small $|\gtR - \learnedR|$ value. 

Given the form of the feasible reward functions corresponding to an expert policy as derived in Lemma \ref{lemma:feasible_reward}, we express the estimation error $|\gtR - \learnedR|$ as a function of the ground-truth expert policy $\expertPolicy$ and the estimated expert policy $\approxExpertPolicy$ from limited data. The bound on the reward function estimation error is shown below.

\begin{theorem}[Extension of Theorem 3.1 in \citet{pmlr-v139-metelli21a}]\label{theorem:reward_expert}
Let  $\IRL = (\partialMDP, \expertPolicy)$ and $\approxIRL = (\partialMDP, \approxExpertPolicy)$ be two IRL problems. Then for any $\gtR \in \feasibleSet{\IRL}$ such that $ \gtR = -\expertFilterC{\expertPolicy}\zeta + (E - \gtGamma \transition)V$ and $\left\lVert\gtR\right\rVert_{\infty} \leq R_{\max}$ there exist $ \learnedR \in \feasibleSet{\approxIRL}$ and $0 < \learnedGamma < 1$, such that element-wise it holds that:
\begin{equation}
    \left|\gtR - \learnedR\right| \leq \expertFilterC{\expertPolicy}\expertFilter{\approxExpertPolicy}\zeta.
\end{equation}
Furthermore, $\left\lVert\zeta\right\rVert_{\infty} \leq \frac{R_{\max}}{1-\gtGamma}$.
\end{theorem}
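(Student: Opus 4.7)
The plan is to construct the target $\learnedR \in \feasibleSet{\approxIRL}$ by reusing the ``value-shaping'' part of the given decomposition of $\gtR$ and swapping only the expert-filter-complement from $\expertPolicy$ to $\approxExpertPolicy$, while keeping the discount factor unchanged. Before constructing anything, I would first \textbf{canonicalize} the given $\zeta$: because $\expertFilterC{\expertPolicy}$ zeroes every entry at which $\expertPolicy(a|s) > 0$, the value of $\zeta(s,a)$ on the expert's support is entirely invisible in $\gtR$. Without loss of generality I may therefore replace $\zeta$ by $\zeta(s,a)\mathds{1}\{\expertPolicy(a|s)=0\}$, obtaining a non-negative function that produces the same $\gtR$ and that additionally satisfies $\zeta(s,a)=0$ whenever $\expertPolicy(a|s) > 0$. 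This harmless preprocessing is the crux of the whole argument.

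With the canonicalized $\zeta$ in hand, set $\learnedGamma := \gtGamma \in (0,1)$ and define
\[
\learnedR := -\expertFilterC{\approxExpertPolicy}\zeta + (E - \learnedGamma\, \transition)V.
\]
Since $\zeta \geq 0$, Lemma~\ref{lemma:feasible_reward} applied to $\approxIRL$ immediately certifies $\learnedR \in \feasibleSet{\approxIRL}$. Subtracting,
\[
\gtR - \learnedR = (\expertFilterC{\approxExpertPolicy} - \expertFilterC{\expertPolicy})\zeta.
\]
A four-case analysis on whether $\expertPolicy(a|s)$ and $\approxExpertPolicy(a|s)$ are zero shows the difference vanishes in three cases; the lone surviving case $\expertPolicy(a|s)=0$ and $\approxExpertPolicy(a|s) > 0$ contributes $-\zeta(s,a)$. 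The symmetric ``unwanted'' case $\expertPolicy(a|s) > 0$, $\approxExpertPolicy(a|s) = 0$, which would \emph{not} be captured by the right-hand side $\expertFilterC{\expertPolicy}\expertFilter{\approxExpertPolicy}\zeta$, is precisely the one eliminated by the canonicalization above. Taking absolute values entry-wise yields the claimed bound $|\gtR - \learnedR| \leq \expertFilterC{\expertPolicy}\expertFilter{\approxExpertPolicy}\zeta$.

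For the norm bound on $\zeta$, I would realize the decomposition with the expert value function, i.e.\ $V := \VFunction{\expertPolicy}{\gtR}{\gtGamma}$, which is exactly the choice underlying the derivation behind Lemma~\ref{lemma:feasible_reward}. Rearranging the decomposition on off-support pairs gives $\zeta(s,a) = \VFunction{\expertPolicy}{\gtR}{\gtGamma}(s) - \QFunction{\expertPolicy}{\gtR}{\gtGamma}(s,a)$, and since $\gtR$ takes values in $[0, R_{\max}]$ both $\VFunction{\expertPolicy}{\gtR}{\gtGamma}$ and $\QFunction{\expertPolicy}{\gtR}{\gtGamma}$ lie in $[0, R_{\max}/(1-\gtGamma)]$, giving $0 \leq \zeta \leq R_{\max}/(1-\gtGamma)$ entry-wise and hence $\|\zeta\|_\infty \leq R_{\max}/(1-\gtGamma)$. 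The only real obstacle anywhere in the argument is the canonicalization step; once one sees that $\zeta$ on the expert's support is a free parameter that $\expertFilterC{\expertPolicy}$ discards, the rest is routine bookkeeping with the filter operators plus a uniform bound on the expert value function.
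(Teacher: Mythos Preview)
Your proposal is correct and follows essentially the same route as the paper's proof: the paper constructs $\learnedR$ by taking $\hat{\zeta} = \expertFilterC{\expertPolicy}\zeta$ (your ``canonicalization'') and choosing $\hat{V}$ so that the value-shaping terms cancel, then uses the operator identity $I - \expertFilterC{\approxExpertPolicy} = \expertFilter{\approxExpertPolicy}$ in place of your four-case analysis to obtain $\gtR - \learnedR = -\expertFilter{\approxExpertPolicy}\expertFilterC{\expertPolicy}\zeta$. Your treatment of the $\|\zeta\|_\infty$ bound via $V = \VFunction{\expertPolicy}{\gtR}{\gtGamma}$ is in fact more explicit than the paper's, which simply asserts the bound from $\|\gtR\|_\infty \leq R_{\max}$.
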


The theorem above bounds the reward function estimation error by the discrepancy between the true expert policy $\expertPolicy$ and the estimated expert policy $\approxExpertPolicy$ derived from limited data. Intuitively, it states that there exists a reward function $\learnedR$ in the estimated feasible set $\feasibleSet{\approxIRL}$ whose estimation error is controlled by the expert policy estimation error, under the corresponding $\learnedGamma$. Specifically, the error term on the right-hand side is non-zero only for state-action pairs where $\expertPolicy(a \mid s) = 0$ but $\approxExpertPolicy(a \mid s) > 0$. This means the reward estimation error is zero for state-action pairs observed in the expert demonstrations, while errors arise where the estimated expert policy incorrectly assigns positive probability to actions the expert did not take. We refer readers to Appendix \ref{appendix:proof_reward_error} for the detailed proof.

\subsection{Expert Policy Estimation Error from Limited Data}\label{sec:expert_policy_estimation_error}

In this section, we derive a bound on the expert policy estimation error in terms of the amount of expert data \(N\), the effective horizon \(\learnedGamma\), and the induced policy complexity \(|\Pi_{\learnedGamma}|\). We measure this error using the term \(\expertFilterC{\expertPolicy}\expertFilter{\approxExpertPolicy}\zeta\), which quantifies how much the estimated policy \(\approxExpertPolicy\) deviates from the true expert policy \(\expertPolicy\). 

Our derivation proceeds in three steps. First, we derive the expected value of the expert policy estimation error \(\mathbb{E}[\expertFilterC{\expertPolicy} \expertFilter{\approxExpertPolicy}\zeta]\) under our estimation strategy. Then, we apply McDiarmid's inequality to obtain a probabilistic bound on how likely this error deviates from its expected value. Finally, we derive a uniform bound on the estimation error by substituting appropriate threshold values and simplifying the expression.

\paragraph{Expected Value of the Expert Policy Estimation Error}
To estimate the expert policy $\approxExpertPolicy \in \{0, 1\}^{|S|\times |A|}$ from $N$ independent state-action samples, we aggregate the demonstrated pairs by summing their counts into $\widehat{\Pi}^E_N$. For each state $s$, we estimate $\approxExpertPolicy(s, a)$ by selecting the action $a$ with the highest count:
\begin{equation}
    \hat{a}_s = \arg\max_{a} \widehat{\Pi}^E_N(s, a).    
\end{equation}
If the count of \(\hat{a}_s\) is uniquely the highest, we set \(\approxExpertPolicy(s, a)\) to 1 for \(\hat{a}_s\) and 0 for other actions. If no action has a unique maximum count, we set all entries to 0, indicating uncertainty. As the number of samples \(N\) increases, the estimated policy is more likely to reflect the expert’s true decisions. For further details, see Appendix \ref{subsubsection:expert_policy_estimation_strategy}.

Next, based on the estimation strategy described above, we first compute the expected value for the approximated expert policy $\mathbb{E}[\approxExpertPolicy]$ estimated from $N$ expert samples, and in turn that of the estimation error \(\mathbb{E}[\expertFilterC{\expertPolicy}\expertFilter{\approxExpertPolicy}\zeta]\). 

We first compute the expected value of $\approxExpertPolicy(s, a)$ given $N$ expert state-action pairs. The expected number state \(s\) is sampled across the \(N\) samples is $N \cdot \frac{1}{|S|}$. Moreover, since the expert policy is deterministic, the probability that \(a_s^*\) is the unique maximum at state $s$ is the same as the probability that state \(s\) is sampled at least once. Therefore, the expectation of the estimated policy \(\mathbb{E}[\approxExpertPolicy]\) for each state \(s\) is given by:
\[
\mathbb{E}[\approxExpertPolicy(s, a)] =
\begin{cases} 
1 - \left(1 - \frac{1}{|S|}\right)^N & \text{for } a = a^*_s,\\
0 & \text{otherwise}.
\end{cases}
\]

Next, we compute the expected value of the expert policy estimation error $\mathbb{E}[\expertFilterC{\expertPolicy}\expertFilter{\approxExpertPolicy}\zeta]$. Recall the operator \(\expertFilterC{\expertPolicy}\) retains the values of \(\zeta(s, a)\) for actions \(a\) that are not part of the expert policy. The true expert policy \(\expertPolicy\) deterministically selects action \(a_s^*\) for each state \(s\), so the expert-filter complement will only retain the values for actions \(a \neq a_s^*\). Thus, after applying the expert-filter complement, the expected value becomes:
\[
\mathbb{E}[\expertFilterC{\expertPolicy} \expertFilter{\approxExpertPolicy}\zeta(s, a)] = \mathbb{E}[\expertFilter{\approxExpertPolicy}\zeta(s, a)] \cdot \mathbb{E}[\mathds{1}\{ \expertPolicy(a|s) = 0 \}].
\]

Moreover, the operator \(\expertFilter{\approxExpertPolicy}\) preserves the function values for actions that are chosen by the estimated policy \(\approxExpertPolicy(s, a)\). Thus, the expected value of \(\expertFilter{\approxExpertPolicy}\zeta(s, a)\) is:
\begin{align}
\mathbb{E}[\expertFilter{\approxExpertPolicy}\zeta(s, a)] & = \zeta(s, a) \cdot \mathbb{E}[\mathds{1}\{\approxExpertPolicy(a|s) > 0\}]. 
\end{align}
 
Simplifying the expressions above, we have $\mathbb{E}[\expertFilterC{\expertPolicy}\expertFilter{\approxExpertPolicy}\zeta(s, a)] = 0$, since \(\expertPolicy(a|s) = 0\) for all \(a \neq a_s^*\). Intuitively, an expected expert policy estimation error of 0 implies that, on average, the estimated policy \(\approxExpertPolicy\) correctly matches the expert policy \(\expertPolicy\). This means it does not assign non-zero probabilities to actions the expert would not take, indicating no systematic bias in the estimation process.

\paragraph{Applying McDiarmid's Inequality}
Next, we obtain a probabilistic bound on how likely the expert policy estimation error deviates from its expected value $\mathbb{E}[\expertFilterC{\expertPolicy}\expertFilter{\approxExpertPolicy}\zeta(s, a)]$. We do so by applying McDiarmid’s Inequality to the term \(\expertFilterC{\expertPolicy}\expertFilter{\approxExpertPolicy}\zeta\). Since \(\zeta\) is bounded by \(\frac{R_{\max}}{1-\learnedGamma}\), altering any single state-action sample affects at most one row of the estimated policy \(\approxExpertPolicy(s, :)\), which leads to a bounded change in the value of \(\expertFilterC{\expertPolicy}\expertFilter{\approxExpertPolicy}\zeta\) by at most \(\frac{R_{\max}}{1-\learnedGamma}\). This satisfies the bounded difference condition necessary for McDiarmid’s Inequality.

By applying McDiarmid’s Inequality, we derive that the probability of the expert policy estimation error exceeding a threshold \(t\) is bounded by:
\begin{equation}
\Pr\left(\expertFilterC{\expertPolicy}\expertFilter{\approxExpertPolicy}\zeta(s, a) \geq t\right) \leq \exp\left( \frac{-2Nt^2 (1 - \learnedGamma)^2}{R_{\max}^2} \right).
\end{equation}

This bound quantifies how likely this estimation error exceeds a threshold \(t\). As the number of expert-demonstrated state-action pairs \(N\) increases, the probability of significant deviations from the true policy decreases exponentially.

\paragraph{Uniform Bound on the Expect Policy Estimation Error}To obtain a uniform bound on the expert policy estimation error across all \((s, \pi)\) pairs, we apply the union bound and set the right-hand side of the bound to \(\frac{\delta}{|S| |\Pi_{\learnedGamma}|}\). Solving for \(t\), we derive the threshold that bounds the error with probability at least \(1 - \delta\):
\begin{equation}
    t = \frac{R_{\max}}{1-\learnedGamma} \sqrt{ \frac{1}{2N} \ln\left( \frac{|S| |\Pi_{\learnedGamma}|}{\delta} \right) }.
\end{equation}
Since the reward function estimation error is bounded by the expert policy estimation error, \( \left| \gtR - \learnedR \right| \leq \expertFilterC{\expertPolicy} \expertFilter{\approxExpertPolicy} \zeta \), the error bound threshold \( t \) applies to the reward function estimation error with probability at least \( 1 - \delta \). As the amount of expert data \( N \) increases, the estimation error decreases, reflecting improved accuracy with more data. Moreover, the effective horizon \( \learnedGamma \) plays an important role: smaller values of \( \learnedGamma \) reduce both \( \frac{1}{1 - \learnedGamma} \) and \( |\Pi_{\learnedGamma}| \), tightening the bound and better controlling the estimation error.

\subsection{Policy Class Complexity Increases with $\learnedGamma$}
\label{sec:complexity_measure}

In the previous section, we bounded the reward estimation error in terms of the policy class complexity \(|\Pi_{\learnedGamma}|\) and the effective horizon \(\learnedGamma\). In fact, this policy class complexity also depends on the horizon that induces it. In this section, we examine how \(\learnedGamma\) influences policy complexity and prove that, for the policy class in Definition ~\ref{def:complexity}, this complexity grows monotonically with the discount factor \(\learnedGamma\). Consequently, we could deduce how changes in \(\learnedGamma\) affect the overall reward estimation error. The full proof on the monotonicity is provided in Appendix~\ref{appendix:complexity_measure}.

\begin{theorem} \label{theorem:complexity_measure}
Consider a fixed MDP \(\MDP = (\stateSpace, \actionSpace, \transition, \cdot, \cdot)\) with state space \(\stateSpace\), action space \(\actionSpace\), and transition function \(\transition\). For any reward function \(\cR \in \mathcal{F}_{\cR}\) that satisfies the mild condition in Definition~\ref{def:complexity} \footnote{This reward function form is not overly restrictive, as it merely excludes reward functions that assign equal rewards to multiple actions for any given state. In systems where the true reward \( R \notin \mathcal{F}_R \), we can always construct a policy-invariant \( R' \in \mathcal{F}_R \). We assume this specific form of reward function to ensure that discussions about the policy class remain meaningful, as any policy could be considered optimal when arbitrary reward functions are allowed.}, we have the following claims about its policy class:
\begin{enumerate}
    \item $\forall \learnedGamma, \gamma' \in [0, 1)$, if $\learnedGamma < \gamma'$, then $\policyComplexity{\learnedGamma} \subseteq \policyComplexity{\gamma'}$.
    \item When $\learnedGamma = 0$, $|\policyComplexity{0}|$ = 1.
    \item If $\learnedGamma \to 1$, $|\policyComplexity{\learnedGamma}| \geq \left(|A|-1\right)^{|S|-1}|S|$ under mild conditions.  
\end{enumerate}
\end{theorem}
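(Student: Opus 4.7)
My plan is to prove the three claims of Theorem~\ref{theorem:complexity_measure} separately via explicit constructions of reward functions in $\mathcal{F}_{\cR}$. For Claim 1 (if $\learnedGamma < \gamma'$, then $\policyComplexity{\learnedGamma} \subseteq \policyComplexity{\gamma'}$), I will take any $\pi \in \policyComplexity{\learnedGamma}$ and construct $R' \in \mathcal{F}_{\cR}$ certifying $\pi \in \policyComplexity{\gamma'}$. The idea is that enlarging the immediate-reward margin between $\pi(s)$ and alternative actions at every state preserves $\pi$'s optimality under any discount factor. Concretely, I will either scale the witness reward $R$ to amplify its gaps or freshly construct $R'$ giving a strict reward preference to $\pi(s)$ at each $s$, so that the immediate-reward advantage dominates any $\gamma'$-discounted value differences from competing policies; this inclusion is the only direction we need, since the reverse direction is not asserted.

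For Claim 2, when $\learnedGamma = 0$ the Bellman recursion collapses to $Q^{*}(s,a) = R(s,a)$, and the optimal action at every state is determined solely by the local maximizer of $R$. Under the paper's interpretation of $\policyComplexity{0}$ (and the implicit canonicalization of the reward it uses), the assertion $|\policyComplexity{0}| = 1$ should follow by direct inspection of this zero-horizon base case: the unique-maximum condition of $\mathcal{F}_{\cR}$ collapses the Bellman fixed point to a single greedy policy.

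For Claim 3, I will exhibit at least $(|A|-1)^{|S|-1}|S|$ distinct policies in $\policyComplexity{\learnedGamma}$ as $\learnedGamma \to 1$. The recipe is to designate one of $|S|$ states as a ``target'' $s^{*}$ (contributing the factor $|S|$) and, at each of the remaining $|S|-1$ states, let the policy freely choose one of $|A|-1$ actions (contributing $(|A|-1)^{|S|-1}$). For each such candidate policy I will design a reward function concentrating high reward at $s^{*}$ and assigning carefully structured smaller rewards elsewhere, so that as $\learnedGamma \to 1$ repeated visits to $s^{*}$ dominate the expected return and the candidate policy becomes strictly optimal. The hardest step will be Claim 3: I must simultaneously verify that each constructed reward lies in $\mathcal{F}_{\cR}$ (strict unique maximum per state), that the candidate policy is strictly optimal once $\learnedGamma$ is close enough to $1$ so that the $\frac{1}{1-\learnedGamma}$-scaled long-run contribution from $s^{*}$ overwhelms any transient bookkeeping, and that distinct combinations of (target state, non-target actions) yield distinct induced policies without double-counting.
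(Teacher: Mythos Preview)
Your plans for Claims 2 and 3 are close to the paper's; the real gap is in Claim 1.

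For Claim 1, neither of your two proposals works as intended. Scaling the witness $R$ cannot transfer optimality across discount factors: for any $c>0$, $\pi$ is optimal for $(cR,\gamma')$ if and only if it is optimal for $(R,\gamma')$, so scaling changes nothing. Your second proposal---freshly defining $R'$ so that $\pi(s)$ is the strict reward-maximizer at every $s$---does place $\pi$ in $\Pi_{\gamma'}$, but it equally places $\pi$ in $\Pi_0$, and would in fact show that \emph{every} deterministic policy lies in $\Pi_\gamma$ for every $\gamma$; this is flatly incompatible with how the paper (and you) argue Claim~2. The paper's actual construction is via potential-based reward shaping: from the witness $R$ it builds $\hat R(s,a) = R(s,a) - \tfrac{\gamma'-\gamma}{\gamma}\,V^{\pi}_{\hat R,\gamma'}(s)$, a shift of $R$ by a state-only quantity, and shows that $V^{\pi}_{\hat R,\gamma'}$ coincides with the value of $\pi$ under a potential-shaped (hence policy-equivalent) version of $R$ at discount $\gamma$. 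Because the correction term is action-independent, $\hat R$ inherits the unique-argmax property of $R$ and so stays in $\mathcal{F}_R$; because the value functions match, $\pi$ is optimal for $(\hat R,\gamma')$. This shaping argument is the idea you are missing.

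For Claim 3 your target-state scheme is the right picture, but the paper makes the ``mild conditions'' explicit by \emph{fixing the transition function}: the action $a^*_s$ self-loops at $s$, and every other action transitions uniformly to $S\setminus\{s\}$. The $|A|-1$ free choices at each non-target state are precisely the actions $a\neq a^*_s$, and the reward is chosen with $R(s^*,a^*_{s^*})>2|S|\,R(s,a^*_s)$ for $s\neq s^*$. You will need to name this transition structure (or something equally concrete) to carry out the $Q$-value comparison that certifies strict optimality as $\learnedGamma\to 1$; ``repeated visits to $s^*$ dominate'' is not enough without knowing that non-$a^*_s$ actions actually reach $s^*$ with controlled probability.
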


Intuitively, claim 1 asserts that as the discount factor $\learnedGamma$ grows, the number of potentially optimal policies increases monotonically. Together with claim 2 and 3, Theorem \ref{theorem:complexity_measure} demonstrates that there is a steep increase in policy complexity associated with the increment of $\learnedGamma$. Specifically, when the discount factor is at its lowest ($\learnedGamma=0$), there is only one optimal policy, as the reward function has a unique maximum state-action pair for each state. However, as $\learnedGamma$ increases and approaches the largest value ($\learnedGamma \to 1$), the optimal policy class can eventually encompass nearly all possible policies, with $|\policyComplexity{\learnedGamma}| = (|\actionSpace|-1)^{|\stateSpace|-1}|\stateSpace|$. In essence, $\learnedGamma$ effectively controls the complexity of the policy class.

\subsection{Error Decomposition and Deriving the Overall Bound}
\label{sec:value_error}
In this section, we use the reward function estimation error bound from earlier to establish the final bound on \( \left\lVert \VFunction{\optimalPolicy{\gtR}{\gtGamma}}{\gtR}{\gtGamma} - \VFunction{\optimalPolicy{\learnedR}{\learnedGamma}}{\gtR}{\gtGamma} \right\rVert_{\infty} \) presented in Theorem \ref{theorem:final_bound}. Recall that \(\optimalPolicy{\gtR}{\gtGamma} \in \policyComplexity{\gtGamma}\) is the optimal policy derived from the ground-truth parameters \((\gtR, \gtGamma)\), while \(\optimalPolicy{\learnedR}{\learnedGamma} \in \policyComplexity{\learnedGamma}\) is the optimal policy derived from the learned parameters \((\learnedR, \learnedGamma)\). We aim to bound the difference between their value functions evaluated under the ground-truth \(\gtR\) and \(\gtGamma\).

To simplify the analysis, we decompose the overall error into two manageable terms: the first accounts for the difference in value functions caused by the reward function estimation error, \(\left\lVert \gtR - \learnedR \right\rVert_{\infty}\); the second accounts for the value difference due to optimizing the policy under different horizons (\(\gtGamma\) versus \(\learnedGamma\)).

\begin{theorem}[Value Function Difference Bound] \label{theorem:value_bound}
Let $\partialMDP = (\stateSpace, \actionSpace, \transition)$ be a partial Markov Decision Process shared by two MDPs. Let $\gtR : \stateSpace \times \actionSpace \rightarrow [0, R_{\max}]$ be the ground-truth reward function, with $\gtGamma \in (0, 1)$ as the ground-truth discount factor. Let $\learnedR : \stateSpace \times \actionSpace \rightarrow [0, R_{\max}]$ and $\learnedGamma \in (0, 1)$ be the estimated reward function and discount factor obtained from data. Consider the optimal policies $\optimalPolicy{\gtR}{\gtGamma}$ and $\optimalPolicy{\learnedR}{\learnedGamma}$ induced by $(\gtR, \gtGamma)$ and $(\learnedR, \learnedGamma)$, respectively. Then, the difference between their value functions, evaluated using the ground-truth reward $\gtR$ and discount factor $\gtGamma$, is bounded above by
\begin{align}
    \left\lVert \VFunction{\optimalPolicy{\gtR}{\gtGamma}}{\gtR}{\gtGamma} - \VFunction{\optimalPolicy{\learnedR}{\learnedGamma}}{\gtR}{\gtGamma} \right\rVert_{\infty} \leq \frac{2}{1 - \learnedGamma} \left\lVert \gtR - \learnedR \right\rVert_{\infty} + \frac{|\gtGamma - \learnedGamma|}{(1 - \gtGamma)(1 - \learnedGamma)} R_{\max}.
    \label{equ:theorem2_bound}
\end{align}
\end{theorem}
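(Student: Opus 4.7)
The plan is to decompose the value gap into a discount-perturbation contribution (for a fixed policy) and a reward-perturbation contribution (at a fixed discount), bound each separately, and exploit the monotonicity of value in $\gamma$ for non-negative rewards to keep only a single copy of the discount term. Write $\pi^g := \optimalPolicy{\gtR}{\gtGamma}$ and $\pi^\ell := \optimalPolicy{\learnedR}{\learnedGamma}$. Since $\pi^g$ is optimal for $(\gtR,\gtGamma)$, the pointwise gap $\VFunction{\pi^g}{\gtR}{\gtGamma}(s) - \VFunction{\pi^\ell}{\gtR}{\gtGamma}(s)$ is non-negative, so the infinity norm equals its supremum and it suffices to produce an upper bound.

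In the regime of interest $\learnedGamma \leq \gtGamma$ (cf. (\ref{equ:optimal_gamma})), $\gtR \geq 0$ implies $\gamma \mapsto \VFunction{\pi}{\gtR}{\gamma}(s)$ is non-decreasing for every $\pi$, so $\VFunction{\pi^\ell}{\gtR}{\gtGamma}(s) \geq \VFunction{\pi^\ell}{\gtR}{\learnedGamma}(s)$. Replacing the subtrahend by $\VFunction{\pi^\ell}{\gtR}{\learnedGamma}(s)$ only weakens the inequality, and adding and subtracting $\VFunction{\pi^g}{\gtR}{\learnedGamma}(s)$ yields the asymmetric split
\[
\VFunction{\pi^g}{\gtR}{\gtGamma}(s) - \VFunction{\pi^\ell}{\gtR}{\gtGamma}(s) \leq \bigl[\VFunction{\pi^g}{\gtR}{\gtGamma}(s) - \VFunction{\pi^g}{\gtR}{\learnedGamma}(s)\bigr] + \bigl[\VFunction{\pi^g}{\gtR}{\learnedGamma}(s) - \VFunction{\pi^\ell}{\gtR}{\learnedGamma}(s)\bigr].
\]
The first bracket is a pure discount shift for a fixed policy against a non-negative reward; expanding both values as geometric series and bounding $\gtR \leq R_{\max}$ gives at most $R_{\max}\sum_{t\geq 0}(\gtGamma^t - \learnedGamma^t) = R_{\max}\frac{\gtGamma - \learnedGamma}{(1-\gtGamma)(1-\learnedGamma)}$, which is the second summand of (\ref{equ:theorem2_bound}).

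For the second bracket, $\pi^g$ is not in general optimal for $(\gtR,\learnedGamma)$, so I upper-bound its value by the true optimum $\VFunction{\optimalPolicy{\gtR}{\learnedGamma}}{\gtR}{\learnedGamma}(s)$ and then compare the two discount-$\learnedGamma$ optima $\optimalPolicy{\gtR}{\learnedGamma}$ and $\pi^\ell = \optimalPolicy{\learnedR}{\learnedGamma}$ via the classical Ng--Russell three-term identity: add and subtract $\VFunction{\optimalPolicy{\gtR}{\learnedGamma}}{\learnedR}{\learnedGamma}(s)$ and $\VFunction{\pi^\ell}{\learnedR}{\learnedGamma}(s)$, discard the middle term by optimality of $\pi^\ell$ for $(\learnedR,\learnedGamma)$, and bound each of the two surviving fixed-policy reward perturbations by $\frac{\lVert\gtR - \learnedR\rVert_{\infty}}{1-\learnedGamma}$. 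This yields $\frac{2}{1-\learnedGamma}\lVert\gtR - \learnedR\rVert_{\infty}$, the first summand of (\ref{equ:theorem2_bound}). The case $\learnedGamma > \gtGamma$ is symmetric: inflate the minuend rather than the subtrahend using $\VFunction{\pi^g}{\gtR}{\gtGamma}(s) \leq \VFunction{\pi^g}{\gtR}{\learnedGamma}(s)$ and split around $\VFunction{\pi^\ell}{\gtR}{\learnedGamma}(s)$; the identical bound emerges with $|\gtGamma - \learnedGamma|$ in place of $\gtGamma - \learnedGamma$.

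The only non-routine step is the asymmetric first decomposition. A symmetric split around both $\VFunction{\pi^g}{\gtR}{\learnedGamma}$ and $\VFunction{\pi^\ell}{\gtR}{\learnedGamma}$ — the reflexive choice — would generate a discount-shift term for each policy and inflate the second summand of (\ref{equ:theorem2_bound}) by a factor of two; exploiting the sign of $\VFunction{\pi^\ell}{\gtR}{\gtGamma} - \VFunction{\pi^\ell}{\gtR}{\learnedGamma}$, forced by $\gtR \geq 0$ and the ordering of the discount factors, is what lets one copy be discarded so that the tight single-factor constant in (\ref{equ:theorem2_bound}) is achieved. Everything else reduces to the geometric-series computation for a fixed policy and the textbook Ng--Russell reward-perturbation argument at the fixed discount $\learnedGamma$.
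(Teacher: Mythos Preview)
Your proposal is correct and is essentially the same argument as the paper's proof: the paper also adds and subtracts $\VFunction{\pi^g}{\gtR}{\learnedGamma}$, bounds the discount-shift bracket via the geometric-series identity (its Lemma~\ref{lemma1}), replaces $\VFunction{\pi^\ell}{\gtR}{\gtGamma}$ by $\VFunction{\pi^\ell}{\gtR}{\learnedGamma}$ using monotonicity in $\gamma$ for non-negative rewards, upper-bounds $\VFunction{\pi^g}{\gtR}{\learnedGamma}$ by $\VFunction{\optimalPolicy{\gtR}{\learnedGamma}}{\gtR}{\learnedGamma}$, and finishes with the three-term reward-perturbation argument (its Lemmas~\ref{lemma2} and~\ref{lemma3}). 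Your write-up is slightly more complete in that you explicitly handle the $\learnedGamma > \gtGamma$ case and explain why the asymmetric split is needed to avoid doubling the discount term, whereas the paper tacitly works under $\learnedGamma \leq \gtGamma$.
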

Proof of Theorem \ref{theorem:value_bound} is in Appendix \ref{appendix:proof_value_error}.

Recall that we have established a bound on the reward estimation error \(\left\lVert \gtR - \learnedR \right\rVert_{\infty}\) in terms of the number of expert data \(N\), the effective horizon \(\learnedGamma\), and the policy complexity \(|\Pi_{\learnedGamma}|\) in Section \ref{sec:expert_policy_estimation_error}. Specifically, with probability at least \(1 - \delta\), the reward function estimation error is bounded by $\frac{R_{\max}}{1-\learnedGamma} \sqrt{ \frac{1}{2N} \ln\left( \frac{|S| |\Pi_{\learnedGamma}|}{\delta} \right) }$. Substituting this bound into the first error term on the right-hand side of Theorem \ref{theorem:value_bound}, we obtain:

\begin{align}
    \left\lVert \VFunction{\optimalPolicy{\gtR}{\gtGamma}}{\gtR}{\gtGamma} - \VFunction{\optimalPolicy{\learnedR}{\learnedGamma}}{\gtR}{\gtGamma} \right\rVert_{\infty} \leq\ & \frac{2R_{\max}}{(1-\learnedGamma)^2} \sqrt{ \frac{1}{2N} \ln\left( \frac{|S| |\Pi_{\learnedGamma}|}{\delta} \right) }  \\
    & + \frac{|\gtGamma - \learnedGamma|}{(1 - \gtGamma)(1 - \learnedGamma)} R_{\max}.
    \label{equ:final_derivation}
\end{align}

This completes the proof of Theorem \ref{theorem:final_bound}. The overall error consists of two terms. The first term arises from the reward function estimation error, which increases with larger \(\learnedGamma\). A larger \(\learnedGamma\) leads to a more complex induced policy class, making overfitting more likely when expert data is limited. Consequently, the reward function estimation error increases as the expert policy estimation error grows significantly. The second term represents the performance loss from using a smaller discount factor (\(\learnedGamma < \gtGamma\)), and decreases as \(\learnedGamma\) approaches \(\gtGamma\). These opposing dependencies on \(\learnedGamma\) suggest that there exists an intermediate value \(0 < \learnedGamma < \gtGamma\) that minimizes the overall loss. We will empirically demonstrate this in the following section.
\section{Experiment}
\label{sec:experiments}
In this section, we empirically examine Theorem \ref{theorem:final_bound} by exploring how discount factors influence IRL performance with varying amounts of expert data. We adapt Linear Programming IRL (LP-IRL) \citep{AndrewNg} and Maximum Entropy IRL (MaxEnt-IRL) \citep{maxentirl} to accommodate different discount factor settings and expert data sizes (implementation details are provided in Appendices \ref{appendix:LP-IRL_objective} and \ref{appendix:maxentirl}).

To jointly learn \((\learnedR, \learnedGamma)\), we incorporate cross-validation into our modified IRL methods to optimize the discount factor (details in Section \ref{sec:cross-val}). By evaluating policies across a range of $\learnedGamma$s, cross-validation allows us to directly observe how varying \(\learnedGamma\) affects IRL performance with different amounts of expert data. While specialized IRL methods exist for optimizing \(\learnedGamma\) \citep{giwa2021estimation, ghosal2023effect}, we choose cross-validation because it not only identifies the optimal discount factor, but also reveals performance trends across various settings, which is crucial for validating the nuanced implications of our theoretical findings.
Specifically, we answer the following questions through cross-validation:
\begin{enumerate}[label={Q.\protect\digits{\theenumi}}]
    \item Can a lower $\learnedGamma < \gtGamma$ improve IRL policy performance?
    \item  How does $\optimalGamma$ change with increasing expert data $N$?
    \item  Is the cross-validation extension effective in finding $\optimalGamma$?
\end{enumerate}
\begin{figure*}
    \centering
    \begin{tabular}{c}
    \includegraphics[width=1\textwidth]{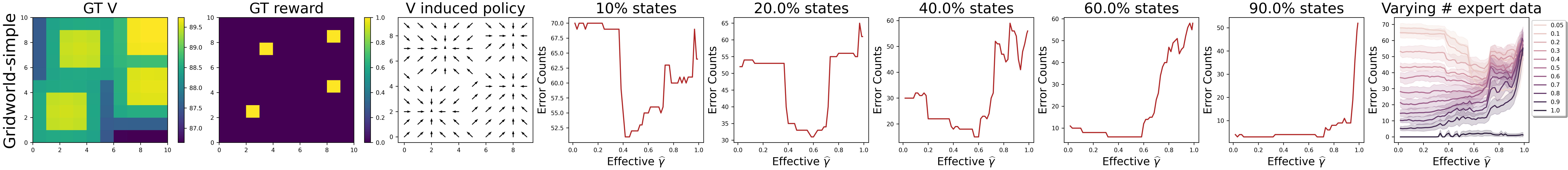}\\
    \includegraphics[width=1\textwidth]{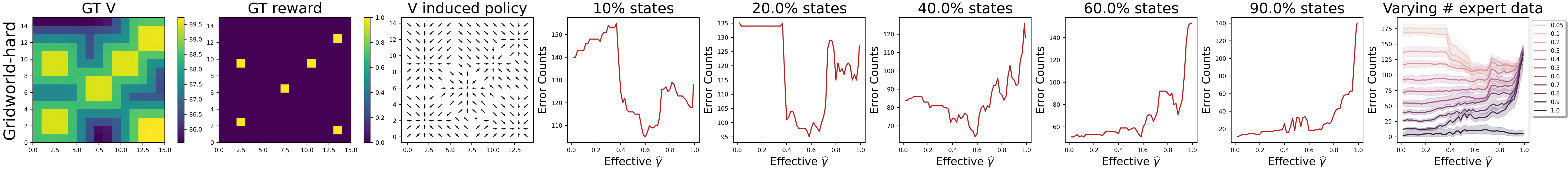}\\
    \includegraphics[width=1\textwidth]{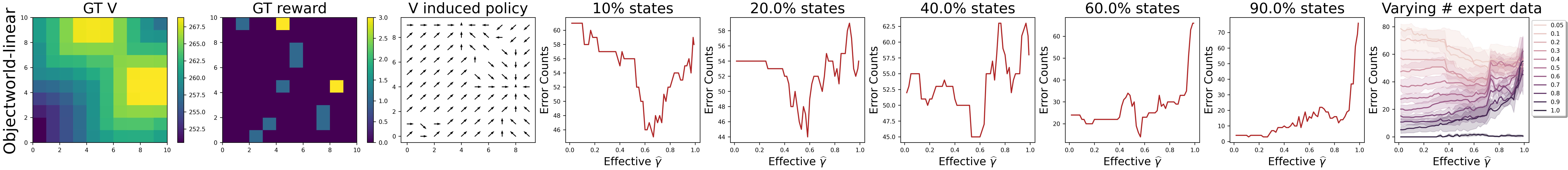}\\ 
    \includegraphics[width=1\textwidth]{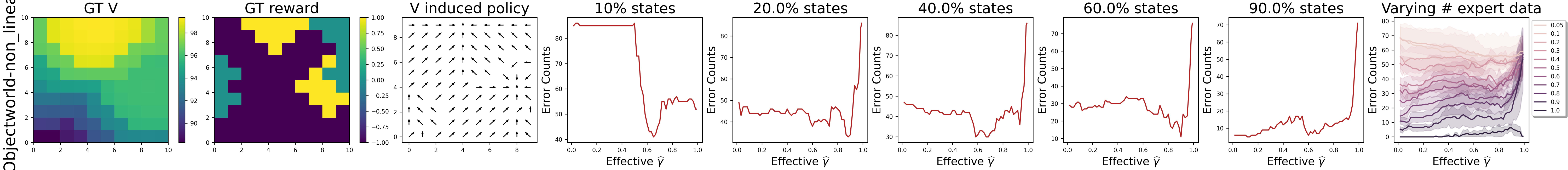}
    \end{tabular}
    \caption{Summary of LP-IRL with varying discount factors across four tasks. The \textit{error counts} measure the number of states for which a policy's action selection deviates from the expert's actions. Each task displays the ground-truth value function (column 1), reward function (column 2), expert policy (column 3), error count curves for different amount of expert data in a \textbf{single instance} (columns 4-8), and the error count curve summary for a \textbf{batch} of 10 MDPs across varying amount of expert data (column 9). In all four tasks, $\gtGamma = 0.99$. The optimal discount factor $\optimalGamma<\gtGamma$ for varying amount of expert data. MaxEnt-IRL has similar curves in Figure \ref{fig:maxent_envs}.} 
    \vspace{-10pt}
    \label{fig:lp_envs}
    \Description{Summary of LP-IRL with varying discount factor}
\end{figure*}

We evaluate the performance of LP-IRL and MaxEnt-IRL on four Gridworld and Objectworld tasks with varying reward complexity. We use the number of incorrectly induced actions as a proxy for value estimation error in the overall error bound. This measure counts the number of states where the induced policy $\approxExpertPolicy$ differs from $\expertPolicy$ in action selection. For Q.1, we measure the number of incorrectly induced actions under varying discounted factors and different amount of expert data. Our findings show that the optimal $\learnedGamma$s across all expert amount are smaller than $\gtGamma$ for both algorithms. For Q.2, we plot how the optimal discount factors change as the number of expert data increases. The consistent U-shaped curves observed in all cases align with the anticipated overfitting effect implied by the second error term in Equation \ref{equ:main-result-bound}. For Q.3, we compare the performance of policies selected via cross-validation with the best policy learnable from the available expert data. Our results indicate that the discrepancy in performance is negligible for all tasks, demonstrating the effectiveness of cross-validation in selecting $\optimalGamma$. 

\subsection{Task Setup}

We design four tasks of varying complexity in reward functions: Gridworld-simple, Gridworld-hard, Objectworld-linear, and Object-world-nonlinear, adapted from \cite{GPIRL} and \cite{AndrewNg}. We illustrate each task instance in the first three columns of Tables~\ref{fig:lp_envs} and more details on the task specification are in Appendix \ref{appendix:task_details}. The ground-truth discount factor is $\gtGamma = 0.99$. All task MDPs are assumed to be ergodic, such that all states are reachable and interconnected, allowing IRL methods to estimate rewards and policies for all states from limited expert data by propagating values from observed states during value iteration. 

The Gridworld tasks provide sparse rewards only at randomly sampled goals: Gridworld-simple has fewer goals ($4$) and a smaller state space ($10\times10$ states), while Gridworld-hard has more goals ($6$) and a larger state space ($15\times15$ states). On the other hand, the Objectworld tasks have denser ground-truth rewards that are functions of nearby object features. The reward function for Objectworld-linear is linear with respect to the features of nearby objects, while that of Objectworld-nonlinear is non-linear. Intuitively, learning a complex reward function may be more susceptible to overfitting, especially when expert data is sparse compared to the state space.

\begin{figure}[t]
\centering
\includegraphics[width=1\columnwidth]{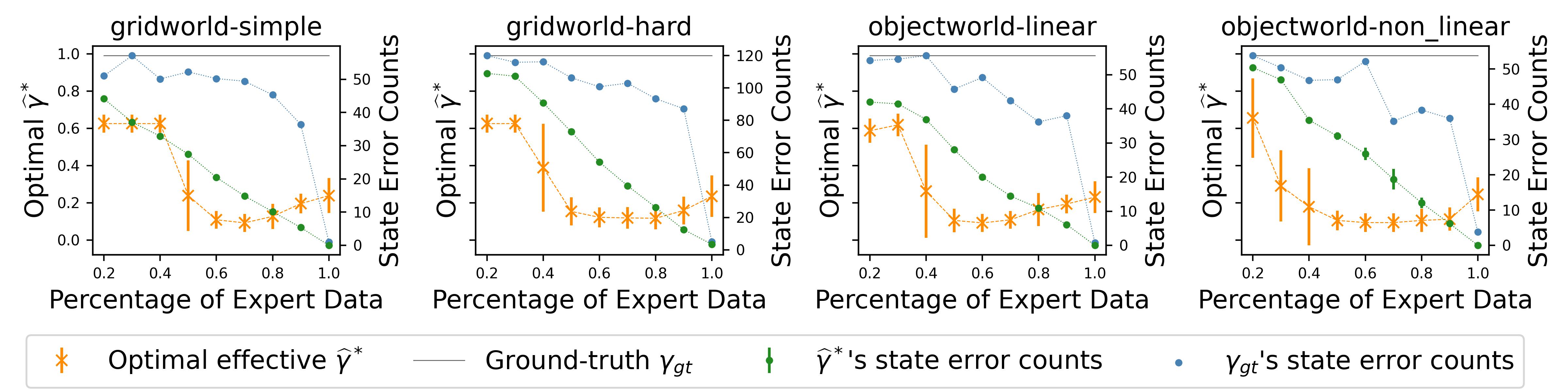}
\caption{Optimal $\optimalGamma$ for LP-IRL at varying amount of expert data. For each task, we select \(\optimalGamma\) for all 10 sampled environments through cross-validation. The orange curves illustrate how the optimal discount factor \(\optimalGamma\) changes with the amount of expert data, while the green curves show the corresponding error counts. The ground-truth \(\gtGamma = 0.99\) is depicted in grey, with its error counts displayed in blue. As the amount of expert data increases, \(\optimalGamma\) initially decreases sharply and then gradually increases, indicating that overfitting is prominent when expert data is scarce.}
\vspace{-15pt}
\label{fig:gamma_vs_coverage}
\Description{Gamma vs Coverage for LP-IRL}
\end{figure}

We vary the amount of expert data by adjusting the number of state-action pairs included, measuring this as a percentage relative to the state space size. Since states may be sampled multiple times in the trajectories, the total number of state-action pairs \( N \) can exceed the number of states \( |S| \). Given expert trajectories \( D = \{ \tau_0, \tau_1, \dots \} \), we define the percentage of expert data as \( K\% = \frac{N}{|S|} \times 100\% \). We evaluate the performance of the induced policy by counting the number of states where it selects a different action from the expert policy—referred to as the \emph{state error count}.

\subsection{Cross Validation Extension}\label{sec:cross-val}

We use cross-validation to determine the optimal \(\optimalGamma\) from the expert data \(D\) containing \(N\) state-action pairs. We split \(D\) into non-overlapping training (80\%) and validation (20\%) sets. We uniformly sample \(M = 20\) discount factors from the interval \((0, \gtGamma)\). For each sampled \(\gamma\), we learn \(R_{\gamma}\) using the training set and evaluate the induced policy on the validation set by counting the number of states where it selects a different action from the expert policy—the \emph{state error count}. The \(\gamma\) that minimizes this error count is selected as the optimal \(\learnedGamma\). For all tasks, we randomly sample 10 environments per task and report the mean and standard deviation of errors. To assess the effectiveness of cross-validation, we employ an oracle representing the best policy learnable from all available expert data. This oracle, considered "cheating," uses both the training and validation sets for training and validates on the entire state space (both observed and unobserved states). We use this oracle only to compare whether the \(\optimalGamma\) chosen by cross-validation corresponds to that of the oracle policy, not for selecting the optimal \(\learnedGamma\).

\subsection{Results}
We assess the impact of the \textit{effective horizon} on IRL by evaluating LP-IRL and MaxEnt-IRL across four tasks. For simplicity, we treat LP-IRL's discount factor \(\gamma\) and MaxEnt-IRL's horizon \(T\) interchangeably, with findings for \(\gamma\) also applying to \(T\) unless specified otherwise. Policy performance results are presented in Tables~\ref{fig:lp_envs} (LP-IRL) and \ref{fig:maxent_envs} (MaxEnt-IRL). We report \emph{state error counts}, measuring discrepancies between induced and expert policies by counting the states where their action selections differ.

\textbf{Q.1 Optimal $\optimalGamma$ is Lower than Ground-Truth.}

As shown in Figures \ref{fig:lp_envs} and \ref{fig:maxent_envs}, the optimal discount factor $\optimalGamma < \gtGamma$  for all four tasks and across various amounts of expert data in both LP-IRL and MaxEnt-IRL. With limited expert data, the error count curves are generally U-shaped: discrepancies with the expert policy decrease as \(\learnedGamma\) increases to a ``sweet spot'' and then rise sharply. This pattern confirms our error bounds in Theorem \ref{theorem:final_bound}: for small \(\learnedGamma\), the overfitting-related error (second term in Equation \ref{equ:main-result-bound}) is less significant, and increasing \(\learnedGamma\) allows temporal extrapolation, reducing the overall error. However, beyond the optimal \(\learnedGamma\), overfitting becomes more pronounced, and the overall error increases as the first error term outweighs the benefits.

With abundant expert data, error counts remain low (in LP-IRL) or initially decrease (in MaxEnt-IRL) for small \(\learnedGamma\) and then increase as \(\learnedGamma\) grows, indicating that \(\optimalGamma < \gtGamma\) yields the most expert-like policy, confirming our theoretical results. Interestingly, in LP-IRL, the error counts do not initially drop as \(\learnedGamma\) increases. This is because, with dense expert data, LP-IRL accurately matches step-wise behaviors, making performance gains from temporal reasoning negligible. This observation supports \citet{FeedbackII}'s insight that naive behavioral cloning can outperform IRL algorithms when ample expert data is available. In contrast, MaxEnt-IRL does not exhibit low error counts for small \(\learnedGamma\) because its reward function is parameterized linearly in state features, limiting its ability to precisely replicate actions even with abundant data.

\begin{figure}[t]
\centering
\includegraphics[width=1\columnwidth]
{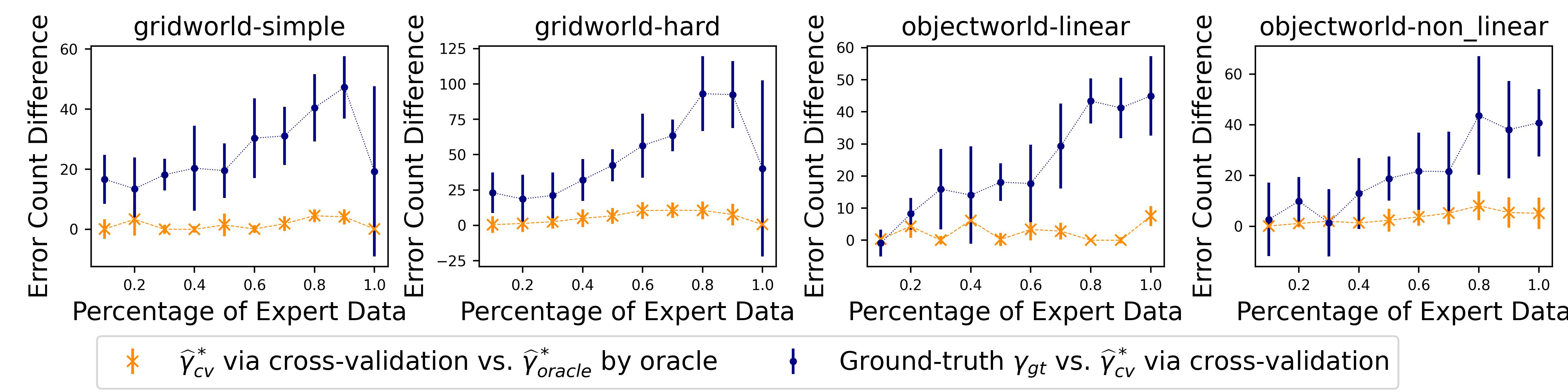}
\caption{The cross-validation results for LP-IRL on four tasks are shown. The \(x\)-axis represents the amount of expert data; the \( y \)-axis shows policy error count differences. We compare discount factors \(\optimalGammaCv\) (learned from cross-validation) and \(\optimalGammaCvAll\) (chosen by the oracle). Orange dots depict error differences between policies induced by \(\optimalGammaCv\) and \(\optimalGammaCvAll\); blue dots show differences between policies induced by \(\optimalGammaCv\) and the ground-truth \(\gtGamma\). The orange curves near zero indicate that cross-validation effectively selects \(\optimalGamma\), while the positive blue curves show that cross-validation consistently yields better policies than using \(\gtGamma\).}
\vspace{-25pt}
\label{fig:summary_cross_validation}
\Description{Cross-validation for LP-IRL}
\end{figure}

\textbf{Q.2 Optimal $\optimalGamma$s Vary with the Amount of Expert Data.}

Figures \ref{fig:gamma_vs_coverage} and \ref{fig:gamma_vs_coverage_maxent} show how the optimal discount factor \(\optimalGamma\) varies with increasing amounts of expert data for LP-IRL and MaxEnt-IRL, respectively. When data is scarce, \(\optimalGamma\) is high because the benefits of temporal reasoning outweigh overfitting concerns—the estimation error of the expert policy remains high regardless of overfitting. A large \(\optimalGamma\) enables extrapolation of actions to nearby unobserved states, reducing the first error term in Theorem \ref{theorem:final_bound} and improving overall error reduction.

With a moderate amount of data, \(\optimalGamma\) decreases: although expert policy estimation improves with more data, it is still limited. The overall error bound favors a smaller \(\optimalGamma\) to mitigate overfitting, reducing the second error term even at the cost of some temporal reasoning benefits. As data increases further, \(\optimalGamma\) rises again since overfitting becomes less significant, and larger values enhance temporal reasoning.

Overall, as expert data increases, error counts for \(\optimalGamma\) strictly decrease and remain below those for \(\gtGamma\), indicating that \(\optimalGamma\) enables IRL to learn more effectively from additional expert data.

\textbf{Q.3 Cross-Validation Effectively Selects Optimal $\optimalGamma$s.} 

Figures \ref{fig:summary_cross_validation} and \ref{fig:summary_cross_validation_maxent} summarize the cross-validation results for LP-IRL and MaxEnt-IRL, respectively. The performance discrepancy between policies induced by \(\optimalGammaCv\) and the oracle \(\optimalGammaCvAll\) (orange curves) is consistently near zero across all four tasks, indicating that cross-validation effectively selects \(\optimalGamma\) similar to the oracle. Moreover, the blue curves represent the error differences between policies induced by the ground-truth \(\gtGamma\) and \(\optimalGammaCv\), which are significantly higher than zero. This suggests that \(\optimalGammaCv\) yields better-performing policies than using the ground-truth \(\gtGamma\), confirming our theoretical findings in Theorem \ref{theorem:final_bound}.
\section{Conclusion}
In this paper, we present a theoretical analysis on IRL that unveils the potential of a reduced horizon in inducing a more expert-like policy, particularly in data-scarce situations. Our findings reveals an important insight on role of the horizon in IRL: it controls the complexity of the induced policy class, therefore reduces overfitting to the limited expert data. We, therefore, propose a more natural IRL function class that jointly learns reward-horizon pairs and empirically substantiate our analysis using a cross-validation extension for the existing IRL algorithms. As overfitting remains a challenge for IRL, especially with scarce expert data, we believe our findings offer valuable insights for the IRL community on better IRL formulations.


\begin{acks}
This research is supported in part by the National Research Foundation (NRF), Singapore and DSO National Laboratories under the AI Singapore Program (AISG Award No: AISG2-RP-2020-016), the National University of Singapore (AcRF Tier-1 grant A-8002616-00-00), the National Science Foundation (Grant No. IIS-2007076). Any opinions, findings and conclusions or recommendations expressed in this material are those of the author(s) and do not reflect the views of NRF Singapore or National Science Foundation.
\end{acks}


\balance
\bibliographystyle{ACM-Reference-Format} 
\bibliography{AAMAS_2025_sample}

\newpage

\appendix
\section{Complexity Measure of Policy} \label{appendix:complexity_measure}

\begin{definition}[Reward and Policy Equivalence]
For an MDP $\MDP = ( \stateSpace, \actionSpace, \transition, ., \cGamma )$, we define two bounded reward functions $\cR$ and $\cR'$ to be equivalent, i.e. $\cR \equiv \cR'$, if and only if they induce the same set of optimal policies.
\end{definition}

\begin{lemma}[Potential-based Reward Shaping \cite{ng1999policy}]\label{reward_shaping}
For an MDP $\MDP = ( \stateSpace, \actionSpace, \transition, ., \cGamma)$, two bounded reward functions $\cR$ and $\cR'$ are equivalent, i.e. $\cR \equiv \cR'$, if and only if there exists a bounded potential function $\potential: \stateSpace \to \mathbb{R}$ such that for all $s, s' \in \stateSpace$ and $a \in \actionSpace$:
\begin{equation}
    \cR'(s, a, s') = \cR(s, a, s') + \cGamma \potential(s') - \potential(s)
\end{equation}
\end{lemma}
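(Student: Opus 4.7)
The plan is to prove the biconditional direction by direction. The $(\Leftarrow)$ direction follows the classical Ng--Harada--Russell shaping identity, while for $(\Rightarrow)$ I would construct the potential $\potential$ explicitly as the difference of optimal value functions under $\cR$ and $\cR'$.

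For sufficiency, assume $\cR'(s,a,s') = \cR(s,a,s') + \cGamma\,\potential(s') - \potential(s)$ for a bounded $\potential$. Fix any policy $\pi$ and starting state $s_0$, and substitute this identity into $\VFunction{\pi}{\cR'}{\cGamma}(s_0) = \mathbb{E}_{\pi}\!\left[\sum_{t \ge 0}\cGamma^{t}\cR'(s_t,a_t,s_{t+1})\right]$. The shaping terms then collapse telescopically via $\sum_{t \ge 0}\cGamma^{t}\bigl(\cGamma\,\potential(s_{t+1}) - \potential(s_t)\bigr) = -\potential(s_0)$, where the limit may be exchanged with the expectation because $\potential$ is bounded and $\cGamma < 1$. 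This yields $\VFunction{\pi}{\cR'}{\cGamma}(s_0) = \VFunction{\pi}{\cR}{\cGamma}(s_0) - \potential(s_0)$, a policy-independent shift, so $\cR$ and $\cR'$ share the same set of optimal policies.

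For necessity, assume $\cR \equiv \cR'$ and define $\potential(s) := \VFunction{*}{\cR}{\cGamma}(s) - \VFunction{*}{\cR'}{\cGamma}(s)$, which is bounded because both optimal value functions are. Applying the Bellman optimality equations for $\cR$ and $\cR'$ and using the shared optimal action $a^{*}(s)$ at every state, subtraction yields $\potential(s) = \mathbb{E}_{s'}\!\left[\cR(s,a^{*},s') - \cR'(s,a^{*},s') + \cGamma\,\potential(s')\right]$ along optimal transitions. Extending this to the pointwise triple-wise equality claimed in the lemma is the delicate step.

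The main obstacle is precisely this extension: a pointwise identity on every $(s,a,s')$---including off-optimal actions and successors of zero transition probability under the current $\transition$---cannot follow from policy-level equivalence on a single fixed MDP alone. I would close the gap by invoking the Ng--Harada--Russell perturbation argument, in which the transition dynamics are varied so that each triple enters the support of some optimal policy of a related MDP, thereby extending the equality pointwise to all of $\stateSpace \times \actionSpace \times \stateSpace$ and completing the proof.
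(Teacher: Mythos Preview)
The paper does not supply its own proof of this lemma; it is quoted from Ng--Harada--Russell \cite{ng1999policy} and is used only in the sufficiency direction (to construct policy-equivalent rewards in the proof of Theorem~\ref{theorem:complexity_measure}, claim~1, and in the Remark immediately following the lemma). Your telescoping argument for $(\Leftarrow)$ is correct and is exactly the classical one.

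Your necessity argument, however, has a gap that you have correctly located but not actually closed. The lemma as stated fixes a single transition kernel $\transition$, and your proposed fix---perturbing the dynamics so that every triple $(s,a,s')$ becomes relevant---requires that $\cR \equiv \cR'$ continue to hold under those perturbed dynamics. Nothing in the hypothesis grants you that: equivalence is assumed only for the one MDP $(\stateSpace,\actionSpace,\transition,\cdot,\cGamma)$. Indeed, as literally phrased here, $(\Rightarrow)$ is false: for any triple with $\transition(s'\mid s,a)=0$, the value $\cR(s,a,s')$ never enters any value function, so $\cR'$ can disagree with $\cR$ there in a non-potential-based way while remaining equivalent on this MDP. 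The original Ng--Harada--Russell necessity theorem avoids this by hypothesizing that the shaping preserve optimality across \emph{all} transition functions, which is strictly stronger than the single-MDP equivalence stated in the paper. Since the paper only ever invokes $(\Leftarrow)$, this imprecision does not damage any downstream result, but it does mean the $(\Rightarrow)$ direction cannot be proved from the stated hypothesis, and your perturbation plan would be assuming what it needs.
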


\begin{remark}
We extend the potential-based reward shaping presented in lemma \ref{reward_shaping}, which utilizes a reward function parameterized by $\cR(s, a, s')$, to one that does not rely on the subsequent state, namely, $\cR(s, a)$. For an MDP $\MDP = ( \stateSpace, \actionSpace, \transition, ., \cGamma )$, two bounded reward functions $\cR(s, a)$ and $\cR'(s, a)$ are equivalent, if and only if there exists a bounded potential function $\potential: \stateSpace \to \mathbb{R}$ such that for all $s, s' \in \stateSpace$ and $a \in \actionSpace$:
\begin{equation}
    \cR'(s, a)= \cR(s, a) +  \cGamma  \sum_{s'} \transition(s'| a, s)\potential(s') -\potential(s)
\end{equation}
\end{remark}
\begin{proof}
Consider the reward function $\cR(s, a, s')$, its $\cR(s, a)$ counterpart is defined as follows:
\begin{equation}
    \cR(s, a) = \sum_{s'} \transition(s'| a, s) \cR(s, a, s')
\end{equation}
Now, we consider $\cR'(s, a, s') \equiv \cR(s, a, s')$ by potential-based shaping with potential function $\potential$, we have the following reward-equivalent shaping for $\cR'(s, a)$:
\begin{align}
    \cR'(s, a) & = \sum_{s'} \transition(s'|a, s) \cR'(s, a, s') \nonumber\\
    = & \sum_{s'} \transition(s'| a, s)(\cR(s, a, s') + \cGamma \potential(s') - \potential(s)) \nonumber\\
    = & \sum_{s'} \transition(s'| a, s')\cR(s, a, s') + \cGamma \sum_{s'} \transition(s'| a, s)\potential(s') - \potential(s) \sum_{s'} \transition(s'| a, s)\nonumber\\
    = & \cR(s, a) +  \cGamma  \sum_{s'} \transition(s'| a, s)\potential(s') -\potential(s)
\end{align}
\end{proof}

\begin{proof}
\textit{Proof of theorem 1, claim 1}. Given $\cGamma, \cGamma' \in [0, 1), \cGamma < \cGamma'$, we prove if $\pi \in \Pi_{\cGamma}$, then $\pi \in \Pi_{\cGamma'}$ as well. Formally, let $\pi$ be the optimal policy in MDP $\MDP = ( \stateSpace, \actionSpace, \transition, \cR, \cGamma )$, we construct a reward function $\cR' \in F_{\cR}$ such that $\pi$ is still optimal in MDP $( \stateSpace, \actionSpace, \transition, \cR', \cGamma' )$.

Given policy $\pi$, let $\piTransition{\pi}$ be the transition function matrix of size $|\stateSpace| \times |\stateSpace|$ such that $[\piTransition{\pi}](s, s') = \transition(s'| \pi(s), s)$, and $\piR{\pi}$ be the reward vector of size $|\stateSpace|$ such that $[\piR{\pi}](s) = \cR(s, \pi(s))$. We can write the value function $\VFunction{\pi}{\cR}{ \cGamma}$ of policy $\pi$ evaluated under the reward function $R$ and discount factor $\cGamma$ as follows:
\begin{equation}
    \VFunction{\pi}{\cR}{\cGamma} = \piR{\pi} + \cGamma \piTransition{\pi}\VFunction{\pi}{\cR}{\cGamma}
\end{equation}

Next, we apply a potential-based shaping to the original reward function $\piR{\pi}$. Let $\piR{'\pi}$ be the shaped reward vector of size $|\stateSpace|$ such that $[\piR{'\pi}](s) = \cR'(s, \pi(s))$, we have:
\begin{equation}
    \piR{'\pi} = \piR{\pi} + \cGamma \piTransition{\pi}\potential - \potential
\end{equation}
where $\potential$ is the potential vector defined as follows:
\begin{equation}
    \potential = \frac{\cGamma' - \cGamma}{\cGamma}\VFunction{\pi}{\cR'}{\cGamma}
\end{equation} 

With this shaped reward function $\cR'$ and $\cGamma$, we have the new value function:
\begin{align}
    \VFunction{\pi}{\cR'}{\cGamma} = & \piR{'\pi} + \cGamma \piTransition{\pi}\VFunction{\pi}{\cR'}{\cGamma} \nonumber\\
     = & \piR{\pi} + \cGamma \piTransition{\pi}\potential - \potential + \cGamma \piTransition{\pi}\VFunction{\pi}{\cR'}{\cGamma} \nonumber\\
     = & \piR{\pi} + \cGamma \frac{\cGamma' - \cGamma}{\cGamma}\piTransition{\pi}\VFunction{\pi}{\cR'}{\cGamma} - \frac{\cGamma' - \cGamma}{\cGamma}\VFunction{\pi}{\cR'}{\cGamma} + \cGamma \piTransition{\pi}\VFunction{\pi}{\cR'}{\cGamma} \nonumber\\
     = & \piR{\pi} + \cGamma' \piTransition{\pi}\VFunction{\pi}{\cR'}{\cGamma} - \frac{\cGamma' - \cGamma}{\cGamma}\VFunction{\pi}{\cR'}{\cGamma} \nonumber 
\end{align}
Rearranging, we have:
\begin{equation}
     \VFunction{\pi}{\cR'}{\cGamma} = (I - \cGamma' \piTransition{\pi} + \frac{\cGamma' - \cGamma}{\cGamma})^{-1}\piR{\pi} 
\end{equation}
Since $\cR \equiv \cR'$, their respective value function $\VFunction{\pi}{\cR}{\cGamma}$ and $\VFunction{\pi}{\cR'}{\cGamma}$ induce the same set of optimal policies. 
We emphasize that $\cR'$ is not necessary from $F_{\cR}$. 
To prove that any optimal policy $\pi \in \Pi_{\cGamma}$ is still optimal in $\Pi_{\cGamma'}$, we construct a $\hR \in F_{\cR}$ which can make its value function $\VFunction{\pi}{\hR}{\cGamma'}$ evaluated with larger $\cGamma'$ the same as $\VFunction{\pi}{\cR'}{\cGamma} $. 

We construct $\hR$ as follows:
\begin{equation} \label{equ:construct_reward}
    \piRh = \piR{\pi} - \frac{\cGamma'-\cGamma}{\cGamma}\VFunction{\pi}{\hR}{\cGamma'}
\end{equation}
The value function $ \VFunction{\pi}{\hR}{\cGamma'}$ for $\hR$ and $\cGamma'$ is:
\begin{align}
    \VFunction{\pi}{\hR}{\cGamma'} & = \piRh + \cGamma' \piTransition{\pi}\VFunction{\pi}{\hR}{\cGamma'} \\
    & =  R^{\pi} - \frac{\cGamma'-\cGamma}{\cGamma}\VFunction{\pi}{\hR}{\cGamma'} +\cGamma' \piTransition{\pi}\VFunction{\pi}{\hR}{\cGamma'}
\end{align}

Rearrange, we have:
\begin{align}
     \VFunction{\pi}{\hR}{\cGamma'} & = (I - \cGamma' \piTransition{\pi} + \frac{\cGamma' - \cGamma}{\cGamma})^{-1}\piR{\pi} \\
     & = \VFunction{\pi}{\cR'}{\cGamma}
\end{align}

It suffices to show that the construction for $\hR$ in equation \eqref{equ:construct_reward} satisfies $\hR \in F_{\cR}$. We now write the construction for $\hR(s, a)$ for every $(s, a)$ pair:
\begin{equation}
    \hR(s, a) = \cR(s, a) - \frac{\cGamma'-\cGamma}{\cGamma}\VFunction{\pi}{\hR}{\cGamma'}(s)
\end{equation}
We notice that the second term is a factor of the value function, which only depends on the current state $s$. Therefore, $\cR(s, a^*) > \cR(s, a)$ iff $\hR(s, a^*) > \hR(s, a)$ for all $s \in S$. That is, $\hR \in F_{\cR}$.
\end{proof}

\begin{proof}
Proof of theorem 1, claim 2. $\cGamma=0$ is the special case where the planning only performs one-step look ahead and optimize the immediate reward greedily. 
When $\cGamma=0$, the planning objective reduces to
\begin{equation}
    \optimalPolicy{\cR}{\cGamma=0}  = \underset{\pi \in \Pi}{\arg\max}\mathop{\mathbb{E}}_{a_t \sim \pi(s_t)} [\cR(s_t, a_t)]
\end{equation}
Given the assumption that $\forall s \in \stateSpace$, $\underset{a \in \actionSpace}{\arg \max}\; \cR(s, a)$ is unique, then $\pi^*$ is also unique and $|\policyComplexity{0}|$ = 1.
\end{proof}

\begin{proof}
Proof of theorem 1, claim 3. This proof is by construction. We consider a fully connected state space with the transition function $P(s'| s, a)$ defined below. Recall that for each $s \in \stateSpace$, there exists a unique $\optimalA$ that maximizes $\cR(s, a)$, we first define the transition for each state for taking this $\optimalA$:
\begin{equation}\label{transition1}
 \forall s \in \stateSpace, \;\; P(s'| s, \optimalA) =
    \begin{cases}
       1 & \text{if $s' = s$}\\
    0 & \text{otherwise}
    \end{cases}       
\end{equation}
For all other actions $a \in \actionSpace/\{\optimalA\}$ in state $s \in \stateSpace$, the transition function is defined as follows:
\begin{equation}\label{transition2}
  \;\; P(s'|s, a) = \frac{1}{|\stateSpace|-1}, \forall s' \in \stateSpace/ \{s\}
\end{equation}
The given transition model corresponds to a fully connected state space where $(s, \optimalA)$ creates a self-loop, and any other action has an equal probability of transitioning to different states. We construct $\cR_{\optimalS} \in F_{\cR}$ in this manner: for a state $\optimalS \in \stateSpace$, let $\cR_{\optimalS}(\optimalS, \optimalAS) > 2|\stateSpace|\cR_{\optimalS}(s, \optimalA)$ for any other state $s \in \stateSpace / \{\optimalS\}$. Consider an arbitrary policy $\pi$ with the constraints $\pi(\optimalS) = \optimalAS$ and $\pi(s) \neq \optimalA$ for other states. We demonstrate that this policy $\pi$ is optimal in $\cR_{\optimalS}$ and $\transition$. The optimality of $\pi$ at $\optimalS$ is apparent since this state is absorbing and $\pi$ selects the action maximizing immediate reward. For any other state $s$, we show that $\pi$ is optimal by calculating the optimal Q-value of $(s, \pi(s))$ compared to any other action $a$. Remember that for $s \neq s^*$, we constrain $\pi$ not to choose $\optimalA$, so the alternative choice of $a$ is to precisely select $\optimalA$. Therefore, we have:

 \begin{align}
     \optimalQ(s, \pi(s)) = & \cR(s, \pi(s)) + \cGamma (\frac{1}{|\stateSpace|-1}\frac{1}{1-\cGamma}\cR(\optimalS,\optimalAS) \\
     & + \frac{1}{|\stateSpace|-1}\sum_{s' \in \stateSpace/\{\optimalS\}}Q(s', \pi(s')) \\
      Q^*(s, \optimalA) = & \cR(s, \optimalA) + \frac{\cGamma}{1-\cGamma}\cR(s, \optimalA) 
 \end{align}
 
 We have:
\begin{align}
     \optimalQ(s, \pi(s)) & > \cR(s, \pi(s)) + \frac{\cGamma}{1-\cGamma}\frac{1}{|\stateSpace|-1}\cR(\optimalS, \optimalAS) \nonumber \\
     & > \cR(s, \pi(s)) + \frac{\cGamma}{1-\cGamma}\frac{2|\stateSpace|}{|\stateSpace|-1}\cR(s, \optimalA) \nonumber\\
     & > \cR(s, \pi(s)) + \frac{\cGamma}{1-\cGamma}2\cR(s, \optimalA)
\end{align}
Since $2\cR(s, \optimalA) - \cR(s, \optimalA) =  \cR(s, \optimalA) > 0$, and as $\cGamma$ approaches one, $\frac{\cGamma}{1-\cGamma}$ tends to infinity, so for sufficiently large $\cGamma$ we can guarantee that $\optimalQ(s, \pi(s)) \geq \optimalQ(s, a)$. Given each $\optimalS$ and its corresponding $\cR_{\optimalS}$, under our constraints for $\pi$, there are $(|\actionSpace|-1)^{|\stateSpace|-1}$ such policies. In addition, since the choice of $\optimalS$ is arbitrary, we can form $|\stateSpace|$ of such $\cR_{\optimalS}$, therefore, the total number of such policy is $(|
\actionSpace|-1)^{|\stateSpace|-1}|\stateSpace|$. 
\end{proof}
\section{Proofs for Section \ref{sec:main_result}}
\subsection{Proof for Lemma \ref{lemma:feasible_reward}} \label{appendix:proof_feasible_reward}
\begin{proof}
This proof is adapted from the proof of Lemma B.1 and Lemma 3.2 in \citet{pmlr-v139-metelli21a}. Recall that we use the advantage function to derive two conditions such that the expert policy $\expertPolicy$ is optimal under the reward function $\learnedR$ and $\learnedGamma$. Specifically,
\begin{align}
    & \QFunction{\expertPolicy}{\learnedR}{\learnedGamma}(s, a) - \VFunction{\expertPolicy}{\learnedR}{\learnedGamma}(s)  = 0 \;\;\;\;\;\;\;\text{if}\; \expertPolicy(a|s) > 0, \label{equ:cond_1}\\
     &\QFunction{\expertPolicy}{\learnedR}{\learnedGamma}(s, a) - \VFunction{\expertPolicy}{\learnedR}{\learnedGamma}(s) \leq 0 \;\;\;\;\;\;\;\text{if}\; \expertPolicy(a|s) = 0. \label{equ:cond_2}
\end{align}

Consider an IRL problem $\IRL = (\partialMDP, \expertPolicy)$. A Q-function satisfies the specified conditions if and only if there exist $\zeta \in \mathbb{R}^{\stateSpace \times \actionSpace}_{\geq 0}$ and $V \in \mathbb{R}^{|\stateSpace|}$ such that:

\begin{equation}\label{equ:Q-function}
    Q_{\learnedR, \learnedGamma} = - \expertFilterC{\expertPolicy}\zeta +E V.
\end{equation}

Given that $\expertPolicy \expertFilterC{\expertPolicy} = \mathbf{0}_{\stateSpace}$ and $\expertPolicy E = I_S$, the corresponding value function is $V_{\learnedR, \learnedGamma} = \expertPolicy Q_{\learnedR, \learnedGamma} =V$. For any $s\in \stateSpace$ and $a \in \actionSpace$ with $\expertPolicy(a|s) > 0$, we obtain $Q_{\learnedR, \learnedGamma}(s, a) = V(s) = V_{\learnedR, \learnedGamma}(s)$. This establishes the first condition in equation \ref{equ:cond_1}. If $a \in \actionSpace$ has $\expertPolicy(a|s) = 0$, then $Q_{\learnedR, \learnedGamma}(s, a) = - \zeta(s, a) + V(s) = - \zeta(s, a) + V_{\learnedR, \learnedGamma}(s) \leq V_{\learnedR, \learnedGamma}(s)$. This verifies the second condition in equation \ref{equ:cond_2}. Conversely, if $Q_{\learnedR, \learnedGamma}$ fulfills the two conditions, we set $V = V_{\learnedR, \learnedGamma}$ and $\zeta = EV_{\learnedR, \learnedGamma}-Q_{\learnedR, \learnedGamma} \leq 0$.

Next, recall that $Q_{\learnedR, \learnedGamma} = \learnedR + \learnedGamma \transition \expertPolicy Q_{\learnedR, \learnedGamma}$. The Q-function can be written as the fixed point of the above Bellman equation: $Q_{\learnedR, \learnedGamma} = (I_{\stateSpace\times \actionSpace} - \learnedGamma \transition \expertPolicy )^{-1} \learnedR$, and for $\learnedGamma<1$, the matrix is invertible. In other words, with fixed $\expertPolicy$, $\transition$, and $\learnedGamma<1$, there is a one-to-one correspondence between Q-functions and rewards. From equation \ref{equ:Q-function}, we obtain:

\begin{align}
    \learnedR & = (I_{\stateSpace\times \actionSpace} - \learnedGamma \transition \expertPolicy ) ( - \expertFilterC{\expertPolicy}\zeta +EV) \nonumber\\
   & =  - \expertFilterC{\expertPolicy}\zeta + \learnedGamma \transition \expertPolicy \expertFilterC{\expertPolicy}\zeta + (E - \learnedGamma \transition \expertPolicy E)V\nonumber\\
   & =  - \expertFilterC{\expertPolicy}\zeta + (E - \learnedGamma \transition)V,
\end{align}

since $\expertPolicy \expertFilterC{\expertPolicy} = \mathbf{0}_S$ and $\expertPolicy E = I_S$.
\end{proof}
\subsection{Proof for Theorem \ref{theorem:reward_expert}}
\label{appendix:proof_reward_error}
\begin{proof} 

This proof is adapted from Theorem 3.1 in \citet{pmlr-v139-metelli21a}. Note that $\gtR$ in Theorem \ref{theorem:reward_expert} is the ground-truth reward function and has the corresponding ground-truth discount factor $\gtGamma$. Using Lemma \ref{lemma:feasible_reward}, we express reward functions $\gtR \in \feasibleSet{\IRL}$ and $\learnedR \in \feasibleSet{\approxIRL}$ as:

\begin{align}
    \gtR & = -\expertFilterC{\expertPolicy}\zeta + (E - \gtGamma \transition)V, \\
    \learnedR & = -\expertFilterC{\approxExpertPolicy}\hat{\zeta} + (E - \learnedGamma \transition)\hat{V},
\end{align}

where $V, \hat{V} \in \mathbb{R}^{\stateSpace}$ and $\zeta, \hat{\zeta} \in \mathbb{R}^{\stateSpace \times \actionSpace}_{\geq 0}$. To find the existence of $\learnedR \in \feasibleSet{\approxIRL}$, we choose $\hat{V} =(E - \learnedGamma \transition)^{-1} (E - \gtGamma \transition)V$ and $\hat{\zeta} = \expertFilterC{\expertPolicy}\zeta$. Then:

\begin{align}
    \gtR - \learnedR  = & - (\expertFilterC{\expertPolicy}\zeta - \expertFilterC{\approxExpertPolicy}\expertFilterC{\expertPolicy}\zeta) + (E - \gtGamma \transition)V - (E - \learnedGamma \transition)(E - \learnedGamma \transition)^{-1} (E - \gtGamma \transition)V \nonumber \\
    = & - (I_{\stateSpace \times \actionSpace} - \expertFilterC{\approxExpertPolicy})\expertFilterC{\expertPolicy}\zeta \nonumber \\
    = & - \expertFilter{\approxExpertPolicy}\expertFilterC{\expertPolicy}\zeta
\end{align}

As the expert-filter-complement $\expertFilterC{\approxExpertPolicy}$ is linear and sums with the expert-filter to unity, i.e., $\expertFilter{\approxExpertPolicy} + \expertFilterC{\approxExpertPolicy} = I_{\stateSpace \times \actionSpace}$, we get:

\begin{equation}
    \lvert \gtR - \learnedR \rvert \leq\expertFilter{\approxExpertPolicy}\expertFilterC{\expertPolicy}\zeta.
\end{equation}

Finally, we obtain $\left\lVert\zeta\right\rVert_{\infty} \leq \frac{R_{\max}}{1-\gtGamma}$ by using the condition $\left\lVert\gtR\right\rVert \leq R_{\max}$.
\end{proof}

\subsection{Proof of Theorem \ref{theorem:value_bound}} \label{appendix:proof_value_error}
To prove Theorem \ref{theorem:value_bound}, we need three lemmas.

\begin{lemma}[lemma 1 from \cite{gamma}] \label{lemma1}
For any MDP $\MDP$ with rewards in $[0, R_{max}]$, $\forall \pi : \stateSpace \to \actionSpace$ and $\learnedGamma \leq \gtGamma$, 
\begin{equation}
    \VFunction{\pi}{\gtR}{\learnedGamma} \leq \VFunction{\pi}{\gtR}{\gtGamma} \leq   \VFunction{\pi}{\gtR}{\learnedGamma} + \frac{\gtGamma - \learnedGamma}{(1-\gtGamma)(1-\learnedGamma)}R_{max}
\end{equation}
Hence, we have the following upper bound:
\begin{equation}
    \left\lVert\VFunction{\optimalPolicy{\gtR}{\gtGamma}}{\gtR}{\gtGamma}
    - \VFunction{\optimalPolicy{\gtR}{\gtGamma}}{\gtR}{\learnedGamma}\right\rVert_{\infty} \leq \frac{\gtGamma - \learnedGamma}{(1-\gtGamma)(1-\learnedGamma)}R_{max}
\end{equation}
\end{lemma}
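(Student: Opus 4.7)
The plan is to unpack the value function as an explicit discounted sum over time and then control the effect of changing the discount factor termwise. For a fixed policy $\pi$ with trajectory distribution induced by $\transition$, I would write
\[
\VFunction{\pi}{\gtR}{\gamma}(s) \;=\; \sum_{t=0}^{\infty} \gamma^{t}\,\mathbb{E}\!\left[\,\gtR(s_t, a_t)\,\bigm|\, s_0=s, \pi\,\right],
\]
for either choice $\gamma \in \{\learnedGamma, \gtGamma\}$, and observe that since $\gtR \in [0, R_{\max}]$ the per-step expectation is a non-negative quantity bounded above by $R_{\max}$. This reduces the lemma to a scalar comparison between two geometric sums with the same non-negative coefficients.

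For the lower bound $\VFunction{\pi}{\gtR}{\learnedGamma} \le \VFunction{\pi}{\gtR}{\gtGamma}$, I would note that $0 \le \learnedGamma \le \gtGamma < 1$ gives $\learnedGamma^t \le \gtGamma^t$ for every $t \ge 0$, and combine this with the non-negativity of each per-step expected reward to conclude termwise, then sum. For the upper bound, I would take the difference
\[
\VFunction{\pi}{\gtR}{\gtGamma}(s) - \VFunction{\pi}{\gtR}{\learnedGamma}(s) \;=\; \sum_{t=0}^{\infty} \bigl(\gtGamma^{t} - \learnedGamma^{t}\bigr)\,\mathbb{E}\!\left[\gtR(s_t,a_t)\,\bigm|\, s_0=s,\pi\right],
\]
upper-bound the expectation by $R_{\max}$ (using that $\gtGamma^t - \learnedGamma^t \ge 0$), and then evaluate the telescoping geometric series
\[
\sum_{t=0}^{\infty}\bigl(\gtGamma^{t} - \learnedGamma^{t}\bigr) \;=\; \frac{1}{1-\gtGamma} - \frac{1}{1-\learnedGamma} \;=\; \frac{\gtGamma-\learnedGamma}{(1-\gtGamma)(1-\learnedGamma)}.
\]
Multiplying by $R_{\max}$ yields exactly the stated bound.

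The ``Hence'' claim then follows by specializing the first inequality to $\pi = \optimalPolicy{\gtR}{\gtGamma}$: both values on the two sides of the stated norm are values of the \emph{same} policy evaluated under $\gtR$ with different discount factors, so the two-sided bound already proved gives a pointwise (state-by-state) difference in $[0, \frac{\gtGamma-\learnedGamma}{(1-\gtGamma)(1-\learnedGamma)}R_{\max}]$, and taking the supremum over $s$ preserves this bound, giving the $\|\cdot\|_\infty$ inequality.

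There is no genuine obstacle here; the argument is an elementary geometric-series computation. The only subtle point worth flagging is that the non-negativity assumption $\gtR \ge 0$ is essential for both directions: it ensures the termwise comparison $\learnedGamma^t \cdot \mathbb{E}[\gtR] \le \gtGamma^t \cdot \mathbb{E}[\gtR]$ in the lower bound, and it guarantees that the per-step expectation lies in $[0, R_{\max}]$ so that the upper bound on the difference is attained by the uniform substitution $\mathbb{E}[\gtR(s_t,a_t)] \le R_{\max}$ without sign issues.
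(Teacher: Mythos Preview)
Your argument is correct and is exactly the standard proof of this result: write the value function as an infinite discounted sum, use non-negativity of rewards for the termwise lower bound, and evaluate the difference of geometric series for the upper bound. The paper does not supply its own proof of this lemma---it simply cites it from \cite{gamma}---so there is nothing further to compare; your proposal fills in precisely the elementary computation that the cited reference contains.
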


Intuitively, Lemma \ref{lemma1} measures the performance discrepancy of a policy $\pi$ when evaluated under two different discount factors: $\learnedGamma$ and $\gtGamma$.

\begin{lemma}[Adapted from lemma 3 in \cite{gamma}]\label{lemma2}
For any $\MDP = ( \stateSpace, \actionSpace, \transition, \learnedR, \learnedGamma )$ with $\learnedR$ bounded by $[0, R_{max}]$,
\begin{equation}
    \left\lVert\VFunction{\optimalPolicy{\gtR}{\learnedGamma}}{\gtR}{\learnedGamma} - \VFunction{\optimalPolicy{\learnedR}{\learnedGamma}}{\gtR}{\learnedGamma}\right\rVert_{\infty} \leq 2 \underset{\pi \in \Pi_{\learnedGamma}}{\max} \left\lVert\VFunction{\pi}{\gtR}{\learnedGamma} - \VFunction{\pi}{\learnedR}{\learnedGamma}\right\rVert_{\infty}
\end{equation}
\end{lemma}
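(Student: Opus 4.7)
The plan is to bound the left-hand side by a standard three-term telescoping decomposition around the two reward functions. Let $\pi_1 = \optimalPolicy{\gtR}{\learnedGamma}$ and $\pi_2 = \optimalPolicy{\learnedR}{\learnedGamma}$. Since $\pi_1$ is optimal for $(\gtR, \learnedGamma)$, we have $\VFunction{\pi_1}{\gtR}{\learnedGamma}(s) \geq \VFunction{\pi_2}{\gtR}{\learnedGamma}(s)$ at every $s\in\stateSpace$, so the quantity inside the $\|\cdot\|_\infty$ is pointwise nonnegative and it suffices to upper-bound the signed difference directly. I would then write
\[
\VFunction{\pi_1}{\gtR}{\learnedGamma} - \VFunction{\pi_2}{\gtR}{\learnedGamma} = \bigl(\VFunction{\pi_1}{\gtR}{\learnedGamma} - \VFunction{\pi_1}{\learnedR}{\learnedGamma}\bigr) + \bigl(\VFunction{\pi_1}{\learnedR}{\learnedGamma} - \VFunction{\pi_2}{\learnedR}{\learnedGamma}\bigr) + \bigl(\VFunction{\pi_2}{\learnedR}{\learnedGamma} - \VFunction{\pi_2}{\gtR}{\learnedGamma}\bigr).
\]

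The middle term is pointwise non-positive because $\pi_2$ is optimal for $(\learnedR, \learnedGamma)$, so it may be discarded when taking an upper bound. Each of the remaining two terms compares the value of a single fixed policy under $\gtR$ versus $\learnedR$ at the common discount factor $\learnedGamma$; passing to the sup norm, each is bounded above by $\max_\pi \left\lVert \VFunction{\pi}{\gtR}{\learnedGamma} - \VFunction{\pi}{\learnedR}{\learnedGamma} \right\rVert_\infty$. Summing the two surviving terms produces the factor $2$ claimed in the statement.

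The only point that requires a little care is that the right-hand side restricts the maximum to $\pi \in \policyComplexity{\learnedGamma}$ rather than to all stationary policies. By Definition \ref{def:complexity}, $\policyComplexity{\learnedGamma}$ contains every policy that is optimal in $(\stateSpace, \actionSpace, \transition, R, \learnedGamma)$ for some $R \in \mathcal{F}_\cR$. Since $\pi_1$ and $\pi_2$ are both produced as optimal policies at discount factor $\learnedGamma$ under $\gtR$ and $\learnedR$ respectively, and since both reward functions may be taken to satisfy the mild uniqueness condition defining $\mathcal{F}_\cR$, both $\pi_1, \pi_2 \in \policyComplexity{\learnedGamma}$. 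Thus the restricted maximum over $\policyComplexity{\learnedGamma}$ still dominates the two terms actually appearing in the decomposition, and the bound follows.

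I do not anticipate any real obstacle here: the argument is purely algebraic and avoids any Bellman-operator contraction reasoning because we work directly with value functions rather than iterative fixed-point updates. The only subtle bookkeeping is confirming that the two policies arising in the telescoping sum lie inside the policy class $\policyComplexity{\learnedGamma}$, which is precisely what makes the restriction of the maximum on the right-hand side legitimate.
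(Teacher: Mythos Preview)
Your proposal is correct and follows essentially the same approach as the paper: the paper also writes the pointwise three-term telescoping decomposition, discards the term $\VFunction{\pi_1}{\learnedR}{\learnedGamma} - \VFunction{\pi_2}{\learnedR}{\learnedGamma}$ using optimality of $\pi_2$ for $(\learnedR,\learnedGamma)$, and bounds the remaining two terms by the maximum over $\Pi_{\learnedGamma}$. Your extra remark justifying why $\pi_1,\pi_2\in\Pi_{\learnedGamma}$ is a point the paper leaves implicit, so your write-up is, if anything, slightly more careful.
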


\begin{proof}
$\forall s \in \stateSpace,$
\begin{align}
    &\VFunction{\optimalPolicy{\gtR}{\learnedGamma}}{\gtR}{\learnedGamma}(s) - 
    \VFunction{\optimalPolicy{\learnedR}{\learnedGamma}}{\gtR}{\learnedGamma}(s) \nonumber\\
    = & (\VFunction{\optimalPolicy{\gtR}{\learnedGamma}}{\gtR}{\learnedGamma}(s) - \VFunction{\optimalPolicy{\gtR}{\learnedGamma}}{\learnedR}{\learnedGamma}(s) )
    - (\VFunction{\optimalPolicy{\learnedR}{\learnedGamma}}{\gtR}{\learnedGamma}(s) - \VFunction{\optimalPolicy{\learnedR}{\learnedGamma}}{\learnedR}{\learnedGamma}(s) )
    + (\VFunction{\optimalPolicy{\gtR}{\learnedGamma}}{\learnedR}{\learnedGamma}(s) - \VFunction{\optimalPolicy{\learnedR}{\learnedGamma}}{\learnedR}{\learnedGamma}(s) ) \nonumber \\
    \leq & (\VFunction{\optimalPolicy{\gtR}{\learnedGamma}}{\gtR}{\learnedGamma}(s) - \VFunction{\optimalPolicy{\gtR}{ \learnedGamma}}{\learnedR}{\learnedGamma}(s)) 
    - (\VFunction{\optimalPolicy{\learnedR}{\learnedGamma}}{\gtR}{\learnedGamma}(s) - \VFunction{\optimalPolicy{\learnedR}{\learnedGamma}}{\learnedR}{\learnedGamma}(s))\nonumber\\
    \leq &2\underset{\pi \in \Pi_{\learnedGamma}}{\max} \lvert\VFunction{\pi}{\gtR}{\learnedGamma}(s) - \VFunction{\pi}{\learnedR}{\learnedGamma}(s)\rvert.
\end{align}
\end{proof}

Intuitively, Lemma \ref{lemma2} measures the difference between V-functions of two policies: the optimal policy under the ground-truth reward function $\gtR$, and the one under the estimated reward function $\learnedR$. Both are evaluated using $\gtR$ and the effective discount factor $\learnedGamma$. Lemma \ref{lemma2} shows that this difference is at most double the highest V-function difference among policies in $\Pi_{\learnedGamma}$, when evaluated with $\gtR$ and $\learnedR$. This connects the value difference between two optimal policies evaluated under the ground truth reward to the difference in the same policy evaluated using the ground truth and estimated rewards respectively.

\begin{lemma} \label{lemma3}
Let $\partialMDP = ( \stateSpace, \actionSpace, \transition )$ be a partial MDP, and let $\gtR \in \mathbb{R}^{\stateSpace \times \actionSpace}_{\geq 0}$ be the ground-truth reward function and $\learnedR \in \mathbb{R}^{\stateSpace \times \actionSpace}_{\geq 0}$ be the estimated reward function with the discount factor $\learnedGamma$, and let $\pi$ be a policy. Then the following inequality holds:
\begin{equation}
    \left\lVert\VFunction{\pi}{\gtR}{\learnedGamma} - \VFunction{\pi}{\learnedR}{\learnedGamma}\right\rVert_{\infty} \leq \frac{1}{1-\learnedGamma} \left\lVert\gtR - \learnedR\right\rVert_{\infty}
\end{equation}
\end{lemma}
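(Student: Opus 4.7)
The plan is to prove Lemma~\ref{lemma3} by a direct expansion of the value function as a discounted infinite sum of rewards along trajectories induced by $\pi$ and $\transition$. Both $\VFunction{\pi}{\gtR}{\learnedGamma}$ and $\VFunction{\pi}{\learnedR}{\learnedGamma}$ depend on the same policy $\pi$, the same transition kernel $\transition$, and the same discount factor $\learnedGamma$; they differ only in the reward used in each time step. This common structure makes a pointwise coupling of trajectories the natural device: the stochastic parts cancel, and only the pointwise reward differences survive.

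Concretely, I would first write, for any fixed state $s$, the standard representation
\begin{equation}
\VFunction{\pi}{\cR}{\learnedGamma}(s) \;=\; \mathbb{E}_{\pi,\transition}\!\left[\sum_{t=0}^{\infty} \learnedGamma^{\,t}\, \cR(s_t, a_t) \,\Big|\, s_0 = s\right],
\end{equation}
valid for both $\cR = \gtR$ and $\cR = \learnedR$ since $\learnedGamma \in (0,1)$ ensures convergence. Subtracting the two expressions and using linearity of expectation, the difference becomes an expectation (under the \emph{same} trajectory distribution) of $\sum_{t=0}^{\infty} \learnedGamma^{\,t}\bigl(\gtR(s_t,a_t) - \learnedR(s_t,a_t)\bigr)$.

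Next I would apply the triangle inequality inside the expectation, bound each term by $\lvert \gtR(s_t, a_t) - \learnedR(s_t, a_t)\rvert \le \lVert \gtR - \learnedR\rVert_\infty$, and pull the resulting constant out of the expectation. Summing the geometric series $\sum_{t=0}^{\infty} \learnedGamma^{\,t} = \frac{1}{1-\learnedGamma}$ yields the desired bound on $\lvert \VFunction{\pi}{\gtR}{\learnedGamma}(s) - \VFunction{\pi}{\learnedR}{\learnedGamma}(s) \rvert$ for every state $s$, and taking the supremum over $s$ gives the $\infty$-norm statement. An equivalent route is via the closed form $\VFunction{\pi}{\cR}{\learnedGamma} = (I - \learnedGamma \piTransition{\pi})^{-1}\piR{\pi}$ (as used in Appendix~\ref{appendix:complexity_measure}) together with the Neumann series $\sum_{t\ge 0}(\learnedGamma \piTransition{\pi})^t$, whose induced $\infty$-norm is bounded by $\frac{1}{1-\learnedGamma}$ because $\piTransition{\pi}$ is row-stochastic.

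I do not expect any substantive obstacle here: this is a classical simulation-style perturbation lemma, and both routes above are essentially one line of bookkeeping once the infinite-sum (or Neumann series) representation is in place. The only subtlety to flag is the implicit use of $\learnedGamma \in (0,1)$ to guarantee absolute convergence of the discounted sum and invertibility of $I - \learnedGamma \piTransition{\pi}$, which is part of the lemma's hypothesis.
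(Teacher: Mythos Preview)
Your proposal is correct, and your alternative route via the closed form $(I - \learnedGamma \piTransition{\pi})^{-1}$ and its Neumann-series $\infty$-norm bound is exactly the argument the paper gives (there written as $(I_{\stateSpace} - \learnedGamma \pi \transition)^{-1}\pi$, with $\lVert(I_{\stateSpace} - \learnedGamma \pi \transition)^{-1}\rVert_\infty = \tfrac{1}{1-\learnedGamma}$ and $\lVert\pi\rVert_\infty \le 1$). Your primary trajectory-expectation route is just the probabilistic unrolling of the same geometric series, so the two approaches coincide.
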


\begin{proof}
From the bellman equation, we have:
\begin{equation}
\VFunction{\pi}{\gtR}{\learnedGamma} = \pi \gtR + \learnedGamma \pi \transition \VFunction{\pi}{\gtR}{\learnedGamma}.
\end{equation}
Rearrange this, we will have $\VFunction{\pi}{\gtR}{\learnedGamma} = (I_{\stateSpace} - \learnedGamma \pi \transition)^{-1}\pi \gtR$. Therefore, 
\begin{align}
   \VFunction{\pi}{\gtR}{\learnedGamma} - \VFunction{\pi}{\learnedR}{\learnedGamma} & = (I_{\stateSpace} - \learnedGamma \pi \transition)^{-1}\pi \gtR - (I_{\stateSpace} - \learnedGamma \pi \transition)^{-1}\pi \learnedR \nonumber \\
    & = (I_{\stateSpace} - \learnedGamma \pi \transition)^{-1}\pi (\gtR - \learnedR)
\end{align}
For the $L_{\infty}$ inequality, we simply observe:

\begin{align}
    \left\lVert\VFunction{\pi}{\gtR}{\learnedGamma} - \VFunction{\pi}{\learnedR}{\learnedGamma}\right\rVert_{\infty} & = \left\lVert(I_{\stateSpace} - \learnedGamma \pi \transition)^{-1}\pi (\gtR - \learnedR)\right\rVert_{\infty} \nonumber \\
    & \leq \left\lVert(I_{\stateSpace} - \learnedGamma \pi \transition)^{-1}\right\rVert_{\infty} \left\lVert\pi\right\rVert_{\infty} \left\lVert\gtR - \learnedR\right\rVert_{\infty} \nonumber \\
    & \leq \frac{1}{1-\learnedGamma}\left\lVert\gtR - \learnedR\right\rVert_{\infty}
\end{align}
where we exploited the fact that $\left\lVert(I_{\stateSpace} - \learnedGamma \pi \transition)^{-1}\right\rVert_{\infty} = \frac{1}{1-\learnedGamma}$ and that $\left\lVert\pi\right\rVert_{\infty} \leq 1$.
\end{proof}

Using these three lemmas, we can now proceed to prove Theorem \ref{theorem:value_bound}.
\begin{proof}
\textbf{Proof of Theorem \ref{theorem:value_bound}}
$\forall s \in \stateSpace$
\begin{equation}
    \VFunction{\optimalPolicy{\gtR}{\gtGamma}}{\gtR}{\gtGamma}(s) -\VFunction{\optimalPolicy{\learnedR}{\learnedGamma}}{\gtR}{ \gtGamma}(s) = (\VFunction{\optimalPolicy{\gtR}{\gtGamma}}{\gtR}{\gtGamma}(s) - \VFunction{\optimalPolicy{\gtR}{\gtGamma}}{\gtR}{ \learnedGamma}(s))  + ( \VFunction{\optimalPolicy{\gtR}{\gtGamma}}{\gtR}{ \learnedGamma}(s) - \VFunction{\optimalPolicy{\learnedR}{\learnedGamma}}{\gtR}{\gtGamma}(s)). 
\end{equation}

Using Lemma \ref{lemma1}, we obtain:
\begin{equation}
    \left\lVert\VFunction{\optimalPolicy{\gtR}{\gtGamma}}{\gtR}{\gtGamma} - \VFunction{\optimalPolicy{\gtR}{\gtGamma}}{\gtR}{ \learnedGamma}\right\rVert_{\infty} \leq \frac{\gtGamma - \learnedGamma}{(1-\gtGamma)(1-\learnedGamma)}R_{max}.
\end{equation}

For the second term, we derive the following:
\begin{align}
     \VFunction{\optimalPolicy{\gtR}{\gtGamma}}{\gtR}{\learnedGamma}(s) - \VFunction{\optimalPolicy{\learnedR}{\learnedGamma}}{\gtR}{ \gtGamma}(s) \leq & \VFunction{\optimalPolicy{\gtR}{ \gtGamma}}{\gtR}{\learnedGamma}(s) - \VFunction{\optimalPolicy{\learnedR}{ \learnedGamma}}{\gtR}{\learnedGamma}(s) \nonumber\\
    \leq &  \VFunction{\optimalPolicy{\gtR}{\learnedGamma}}{\gtR}{\learnedGamma}(s) - \VFunction{\optimalPolicy{\learnedR}{\learnedGamma}}{\gtR}{\learnedGamma}(s) \nonumber\\
     \leq &  2\underset{\pi \in \Pi_{\learnedGamma}}{\max} \lvert\VFunction{\pi}{\gtR}{\learnedGamma}(s) - \VFunction{\pi}{\learnedR}{\learnedGamma}(s)\rvert. \label{equ:V_bound}
\end{align}
The first line results from $\learnedGamma \leq \gtGamma$, the second line from $\VFunction{\optimalPolicy{\gtR}{ \learnedGamma}}{\gtR}{\learnedGamma}(s) \geq \VFunction{\optimalPolicy{\gtR}{ \gamma'}}{\gtR}{\learnedGamma}(s)$ for any $\gamma' \in [0, 1]$, and the last line utilizes Lemma \ref{lemma2}.

Using Lemma \ref{lemma3}, for any policy $\pi$, we get:
\begin{align} \label{equ:lemma3}
    \left\lVert \VFunction{\pi}{\gtR}{\learnedGamma} - \VFunction{\pi}{\learnedR}{\learnedGamma}\right\rVert_{\infty} \leq \frac{1}{1-\learnedGamma} \left\lVert \gtR - \learnedR\right\rVert_{\infty}
\end{align}

By combining Equations \ref{equ:V_bound} and \ref{equ:lemma3}, we arrive at the inequality:
\begin{equation}
   \left\lVert  \VFunction{\optimalPolicy{\gtR}{\gtGamma}}{\gtR}{\learnedGamma} - \VFunction{\optimalPolicy{\learnedR}{\learnedGamma}}{\gtR}{ \gtGamma}\right\rVert_{\infty}\leq \frac{2}{1-\learnedGamma} \left\lVert \gtR - \learnedR\right\rVert_{\infty}
\end{equation}
\end{proof}
\subsection{Expert Policy Estimation Error Bound}

In this section, we establish a bound on the expert policy estimation error based on the coverage of expert demonstrations. Specifically, we use $\expertFilterC{\expertPolicy}\expertFilter{\approxExpertPolicy}\zeta$ to measure the expert policy estimation error and constrain the reward function estimation error:
\begin{equation}
    \left|\gtR - \learnedR\right| \leq \expertFilterC{\expertPolicy}\expertFilter{\approxExpertPolicy}\zeta.
\end{equation}
We bound this error term $\expertFilterC{\expertPolicy}\expertFilter{\approxExpertPolicy}\zeta$ by the number of expert-demonstrated state-action pairs. The derivation involves three key steps:

\begin{enumerate}
    \item State the strategy for estimating the expert policy $\approxExpertPolicy$ from $N$ samples of expert-demonstrated state-action pairs,
    \item Compute the expectation of the estimated expert policy $\approxExpertPolicy$ based on $N$ samples and subsequently the expectation of the expert policy estimation error $\expertFilterC{\expertPolicy}\expertFilter{\approxExpertPolicy}\zeta$.
    \item Apply McDiarmid's Inequality to $\expertFilterC{\expertPolicy}\expertFilter{\approxExpertPolicy}\zeta$ as a function of $N$ expert-demonstrated samples to bound the expert policy estimation error.
\end{enumerate}

After simplification, the bound on the expert policy estimation error, in terms of the number of expert-demonstrated samples $N$ and the effective $\learnedGamma$, is given by:

\begin{equation}    \Pr\left(\expertFilterC{\expertPolicy}\expertFilter{\approxExpertPolicy}\zeta(s, a)\geq t\right) \leq \exp\left( \frac{-2t^2 (1 - \gamma)^2}{N R_{\max}^2} \right)
\end{equation}

\subsubsection{Expert Policy Estimation Strategy}
\label{subsubsection:expert_policy_estimation_strategy}
\paragraph{Setup} We estimate the expert policy $\approxExpertPolicy$ from $N$ independently sampled state-action pairs from the expert policy $\expertPolicy$. For simplicity, we assume a discrete state and action space. Thus, the expert policies can be represented as a matrix of size $|S| \times |A|$, where rows correspond to states and columns to actions. Given that $\expertPolicy$ is deterministic, each row of $\approxExpertPolicy$ contains at most one entry with a value of 1, indicating that action $a$ is chosen at state $s$; all other entries are 0. Formally, the estimated $\expertPolicy$ can be represented as:
\[
\expertPolicy = \begin{pmatrix} 
x_{1,1} & x_{1,2} & \cdots & x_{1,A} \\ 
x_{2,1} & x_{2,2} & \cdots & x_{2,A} \\ 
\vdots & \vdots & \ddots & \vdots \\ 
x_{S,1} & x_{S,2} & \cdots & x_{S,A} 
\end{pmatrix}
\]
where $x_{s,a} \in \{0, 1\}$, and for each $s \in \{1, \dots, |S|\}$, exactly one entry in the row is 1, while the rest are 0. For each demonstrated state-action pair, we represent them as a matrix of the same size $\widehat{\pi}^E_i \in \{0, 1\}^{|S| \times |A|}$, and there is one row $s_i$ such that:
\[
\widehat{\pi}^E_i(s_i, :) = \expertPolicy(s_i, :)
\]
That is, for each matrix $\widehat{\pi}^E_i$, there exists a row $s_i$ where $\widehat{\pi}^E_i(s_i, :) = \expertPolicy(s_i, :)$. The remaining rows of $\widehat{\pi}^E_i$ are not necessarily useful and contain value 0. We want to devise an estimation strategy that maps the $N$ observed matrices $\widehat{\pi}^E_1, \widehat{\pi}^E_2, \dots,\widehat{\pi}^E_N)$ to the estimated matrix $\approxExpertPolicy$.

\paragraph{Estimation} We describe the estimation strategy below. We first take the element-wise sum for all observed partial expert policy matrices:
\begin{equation}
  \widehat{\Pi}^E_N(s, a)   = \sum_{i=1}^N \widehat{\pi}^E_i(s, a)
\end{equation}
Specifically, each of the entry of $\widehat{\Pi}^E_N(s, a)$ counts how many times action $a$ was chosen for state $s$ across the $N$ observed matrices. We estimate $\approxExpertPolicy$ from the aggregated $\widehat{\Pi}^E_N$. 

For each row $s$, estimate the action $a$ that corresponds to the column where the value is 1 by finding the column with the uniquely maximum count:
\[
\hat{a}_s = \arg\max_{a} \widehat{\Pi}^E_N(s, a) \text{\;such that
\;} \widehat{\Pi}^E_N(s, \hat{a}_s) > \widehat{\Pi}^E_N(s, a), \forall a \neq \hat{a}_s
\]
Thus, the estimated row $\approxExpertPolicy(s, :)$ is given by placing a 1 in column $\hat{a}_s$ and 0 in all other columns:
\[
\approxExpertPolicy(s, a) = \begin{cases} 
1, & \text{if } a = \hat{a}_s \\ 
0, & \text{otherwise} 
\end{cases}
\]
For rows that do not have a single action with the highest count, set all entries in the row to 0. Thus the final $\approxExpertPolicy$ is estimated by choosing the column with the highest vote from the observed matrices.

\subsubsection{Calculating the Expected Value}
\label{subsubsection:expected_value_calculation}
Given that there are \(|S|\) states and each of the $N$ demonstrated states is uniformly and independently sampled, we first compute of the expectation of the estimated policy \(\mathbb{E}[\approxExpertPolicy]\), then the expectation of the expert policy estimation error $\expertFilterC{\expertPolicy}\expertFilter{\approxExpertPolicy}\zeta$.

\paragraph{Expected Value of Estimated Policy}The count for each action in the aggregated matrix \(\widehat{\Pi}^E_N(s, a)\) depends on how often state \(s\) is sampled in \(N\) independent demonstrations. Since the states are sampled uniformly, the probability of observing state \(s\) in any single demonstration is \(\frac{1}{|S|}\). Let \(X_s\) be the random variable representing the number of times state \(s\) is sampled in \(N\) independent demonstrations. \(X_s\) follows a binomial distribution: $X_s \sim \text{Binomial}(N, \frac{1}{|S|})$. Thus, the expected number of times state \(s\) is sampled across the \(N\) demonstrations is:
\[
\mathbb{E}[X_s] = N \cdot \frac{1}{|S|}.
\]

For each time state \(s\) is sampled, the expert chooses the correct action \(a_s^*\). Therefore, the expected count for action \(a_s^*\) at state \(s\) is: 

\[\mathbb{E}[\widehat{\Pi}^E_N(s, a_s^*)] = \mathbb{E}[X_s] = N \cdot \frac{1}{|S|}.\]

For any action \(a \neq a_s^*\), the expert never chooses it, so the expected count for those actions is $0$:

\[
\mathbb{E}[\widehat{\Pi}^E_N(s, a)] = 0 \quad \text{for all } a \neq a_s^*.
\]

The probability that \(a_s^*\) is the unique maximum can be approximated by the probability that state \(s\) is sampled at least once. This probability is given by
$1 - \left(1 - \frac{1}{|S|}\right)^N$. Therefore, the probability that the correct action \(a_s^*\) is the unique maximum for state \(s\) is approximately:

\[
P_s^{\text{unique}} \approx P_s^{\text{sampled}} = 1 - \left(1 - \frac{1}{|S|}\right)^N
\]

The expectation of the estimated policy \(\mathbb{E}[\approxExpertPolicy]\) for each state \(s\) is given by:

\[
\mathbb{E}[\approxExpertPolicy(s, :)] = P_s^{\text{unique}} \cdot \expertPolicy(s, :)
\]

Substituting \(P_s^{\text{unique}}\) with the expression derived above:

\[
\mathbb{E}[\approxExpertPolicy(s, a)] =
\begin{cases} 
1 - \left(1 - \frac{1}{|S|}\right)^N & \text{for } a = a^*_s,\\
0 & \text{otherwise}.
\end{cases}
\]

The expectation of the estimated expert policy \(\mathbb{E}[\approxExpertPolicy]\) is scaled by the probability that each state \(s\) is sampled at least once across the \(N\) demonstrations. This probability increases as \(N\) grows, and in the limit as \(N \to \infty\), \(\mathbb{E}[\approxExpertPolicy]\) converges to the true expert policy \(\expertPolicy\). 

\paragraph{Expected Value of Expert Policy Estimation Error}

We compute the expected value of the operation \(\expertFilterC{\expertPolicy}\expertFilter{\approxExpertPolicy}\zeta\) given \(N\) demonstrated expert state-action pairs. 

We recap the key definitions for the ease of reading:
\begin{itemize}
    \item $\expertFilter{\approxExpertPolicy}(.)$ represents the \textit{expert-filter} that preserves function values only for actions taken by the approximate expert policy $\approxExpertPolicy(a|s)$,
    \item $\expertFilterC{\expertPolicy}(.)$ is the \textit{expert-filter-complement} that preserves values for actions not taken by the ground-truth expert policy $\expertPolicy(a|s)$,
    \item $\zeta$ is a non-negative function bounded by \(\frac{R_{\max}}{1-\gamma}\).
\end{itemize}

We are interested in computing the expectation of the operation $\mathbb{E}[\expertFilterC{\expertPolicy}\expertFilter{\approxExpertPolicy}\zeta]$. This involves two steps:

In step 1, the expert filter \(\expertFilter{\approxExpertPolicy}\zeta\) keeps the values of \(\zeta(s, a)\) for actions \(a\) that are taken by the estimated expert policy \(\approxExpertPolicy(a|s)\). From the expectation of \(\approxExpertPolicy\), we know that:
\[
\mathbb{E}[\approxExpertPolicy(s, a)] = P_s^{\text{unique}} \cdot \mathds{1}\{ a = a_s^* \}
\]
Thus, the expected value of \(\expertFilter{\approxExpertPolicy}\zeta(s, a)\) is:
\begin{align}
\mathbb{E}[\expertFilter{\approxExpertPolicy}\zeta(s, a)] & = \zeta(s, a) \cdot \mathbb{E}[\mathds{1}\{\approxExpertPolicy(a|s) > 0\}]  \\
& = P_s^{\text{unique}} \cdot \zeta(s, a_s^*) \cdot \mathds{1}\{ a = a_s^*\}  
\end{align}
For any other action \(a \neq a_s^*\), \(\mathbb{E}[\expertFilter{\approxExpertPolicy}\zeta(s, a)] = 0\), since \(\approxExpertPolicy(s, a)\) will not take that action.

In step 2, we apply the expert-filter complement \(\expertFilterC{\expertPolicy}\), which preserves the function values for actions not taken by the true expert policy \(\expertPolicy\). The true expert policy \(\expertPolicy\) deterministically selects action \(a_s^*\) for each state \(s\), so the expert-filter complement will only retain the values for actions \(a \neq a_s^*\).

Thus, after applying the expert-filter complement, the expected value becomes:
\[
\mathbb{E}[\expertFilterC{\expertPolicy} \expertFilter{\approxExpertPolicy}\zeta(s, a)] = \mathbb{E}[\expertFilter{\approxExpertPolicy}\zeta(s, a)] \cdot \mathds{1}\{ \expertPolicy(a|s) = 0 \}
\]

Since \(\expertPolicy(a|s) = 0\) for all \(a \neq a_s^*\), we only preserve values for actions other than \(a_s^*\). Therefore, we have:

\[
\mathbb{E}[\expertFilterC{\expertPolicy}\expertFilter{\approxExpertPolicy}\zeta(s, a)] = 
\begin{cases}
P_s^{\text{unique}} \cdot \zeta(s, a_s^*) \cdot \mathds{1}\{ a = a_s^*\} & \text{if } a \neq a_s^* \\
0 & \text{otherwise}
\end{cases}
\]

We now combine the two steps to compute the full expectation \(\mathbb{E}[\expertFilterC{\expertPolicy}\expertFilter{\approxExpertPolicy}\zeta]\). Since the expert-filter complement \(\expertFilterC{\expertPolicy}\) retains only the values for actions \(a \neq a_s^*\), and \(\expertFilter{\approxExpertPolicy}\) retains only the values for action \(a_s^*\), the result of applying both filters will be 0 for all \(s\) and \(a\), because no action can satisfy both conditions simultaneously.

Thus, the expected value of the operation is:

\[
\mathbb{E}[\expertFilterC{\expertPolicy}\expertFilter{\approxExpertPolicy}\zeta(s, a)] = 0
\]

\subsubsection{Applying McDiarmid's Inequality on the Expert Policy Estimation Error}
\label{subsubsection:applying_mcdiarmid_inequality}
We apply McDiarmid's Inequality to bound the error in the expert policy estimation, specifically \(\mathbb{E}[\expertFilterC{\expertPolicy}\expertFilter{\approxExpertPolicy}\zeta]\), where \(\zeta\) is bounded by \(\frac{R_{\max}}{1-\gamma}\).

\paragraph{McDiarmid's Inequality}
Let \(X_1, X_2, \dots, X_N\) be independent random variables taking values in a space \(\mathcal{X}\), and let \(f: \mathcal{X}^N \to \mathbb{R}\) be a function that satisfies a bounded difference property. Specifically, if there exist constants \(c_i\) such that:

\[
\sup_{x_1, \dots, x_N, x_i'} \left| f(x_1, \dots, x_i, \dots, x_N) - f(x_1, \dots, x_i', \dots, x_N) \right| \leq c_i
\]

then McDiarmid’s Inequality states that:

\[
\Pr\left( f(X_1, \dots, X_N) - \mathbb{E}[f(X_1, \dots, X_N)] \geq t \right) \leq \exp\left( \frac{-2 t ^2}{\sum_{i=1}^N c_i^2} \right)
\]

\paragraph{The Function \(f(\cdot)\) in Our Case}

Let the function \(f(\widehat{\pi}^E_1, \widehat{\pi}^E_2, \dots, \widehat{\pi}^E_N)\) represent \(\expertFilterC{\expertPolicy}\expertFilter{\approxExpertPolicy}\zeta\), which is determined by \(N\) independent state-action pairs sampled from the expert policy. Based on the boundedness of \(\zeta\) by \(\frac{R_{\max}}{1-\gamma}\), we want to bound the deviation of this function from its expectation.

\paragraph{Sensitivity (Bounded Difference Property)}

For each sample $\widehat{\pi}^E_i$, changing it affects at most one row of the estimated policy \(\approxExpertPolicy(s, :)\), and consequently, the value of the function \(\expertFilterC{\expertPolicy}\expertFilter{\approxExpertPolicy}\zeta\) is only affected in that row. The magnitude of this change is limited by the bound on \(\zeta\), which is \(\frac{R_{\max}}{1 - \gamma}\).

Thus, for any change in a single sample \(\widehat{\pi}^E_i\), the change in the value of the function is bounded by:

\[
\left| f(\widehat{\pi}^E_1, \dots, \widehat{\pi}^E_i, \dots, \widehat{\pi}^E_N) - f(\widehat{\pi}^E_1, \dots, (\widehat{\pi}^{E}_i)', \dots, \widehat{\pi}^E_N) \right| \leq \frac{R_{\max}}{1 - \gamma}
\]

This gives us the sensitivity constant \(c_i = \frac{R_{\max}}{1 - \gamma}\) for each sample \(\widehat{\pi}^E_i\).

\paragraph{Applying McDiarmid's Inequality}

Now, we apply McDiarmid's Inequality with \(c_i = \frac{R_{\max}}{1 - \gamma}\) for all \(i = 1, 2, \dots, N\). The inequality becomes:

\[
\Pr\left(\expertFilterC{\expertPolicy}\expertFilter{\approxExpertPolicy}\zeta - \mathbb{E}[\expertFilterC{\expertPolicy}\expertFilter{\approxExpertPolicy}\zeta] \geq t \right) \leq \exp\left( - \frac{2t^2}{N \left( \frac{R_{\max}}{1 - \gamma} \right)^2} \right)
\]

Simplifying the bound, the probability that the expert policy estimation error, measured by \(\expertFilterC{\expertPolicy}\expertFilter{\approxExpertPolicy}\zeta\), deviates from its expected value by more than \(\epsilon\) is bounded by:

\begin{equation}  \Pr\left(\expertFilterC{\expertPolicy}\expertFilter{\approxExpertPolicy}\zeta(s, a) \geq t\right) \leq \exp\left( \frac{-2t^2 (1 - \gamma)^2}{N R_{\max}^2} \right) \label{equ:mcdiarmid_inequality}
\end{equation}

As $N$ increases, the probability of significant deviations from the expert policy estimation error diminishes exponentially, providing a high-confidence guarantee that the estimated policy will converge to the true expert policy. 

Given the state space \(S\) and policy space \(\Pi_{\hat{\gamma}}\) for the effective horizon \(\hat{\gamma}\), we want the bound to hold uniformly over all \((s, \pi)\) pairs. Using the union bound, we ensure the total failure probability remains below \(\frac{\delta}{|S| |\Pi_{\hat{\gamma}}|}\), hence each \((s, \pi)\) pair has a small enough failure probability so that, collectively, the probability of any pair violating the bound is at most \(\delta\). By setting the RHS of the bound to \(\frac{\delta}{|S| |\Pi_{\hat{\gamma}}|}\), we solve for \(t\):
\begin{align}
    \exp\left( \frac{-2t^2 (1 - \gamma)^2}{N R_{\max}^2} \right) &= \frac{\delta}{\lvert \stateSpace \rvert \lvert \Pi_{\learnedGamma} \rvert} \nonumber \\
    \frac{2t^2 (1 - \gamma)^2}{N R_{\max}^2} &= -\ln\left( \frac{\delta}{\lvert \stateSpace \rvert \lvert \Pi_{\learnedGamma} \rvert} \right) \nonumber \\
    t^2 &= \frac{N R_{\max}^2}{2 (1 - \gamma)^2} \ln\left( \frac{\lvert \stateSpace \rvert \lvert \Pi_{\learnedGamma} \rvert}{\delta} \right) \nonumber \\
    t &= \frac{R_{\max}}{1-\learnedGamma} \sqrt{ \frac{N}{2} \ln\left( \frac{\lvert \stateSpace \rvert \lvert \Pi_{\learnedGamma} \rvert}{\delta} \right) }
\end{align}
We obtain the threshold that bounds the expert policy estimation error with probability at least $1-\delta$, uniformly across all state-policy pairs.

At first glance, it may seem counterintuitive that the error bound threshold \(t\) increases with the number of samples \(N\), as indicated in the expression of $t$. However, this bound should be interpreted in terms of the probability of the estimation error exceeding \(t\), i.e., \(\Pr(\text{estimation error} > t)\). As \(N\) increases, the threshold \(t\) grows, but the probability of the error exceeding this larger bound decreases exponentially. 

This means that, though larger deviations are allowed with more samples, these deviations become increasingly unlikely as \(N\) increases. Therefore, for a fixed level of confidence (i.e., fixed \(\delta\)), the overall estimation error \textit{actually decreases} with more data, as the model becomes more accurate, and the likelihood of significant errors diminishes. The increasing threshold merely reflects the fact that we are accounting for possible rare deviations, but these deviations are becoming less probable as \(N\) grows.

\paragraph{Interpreting the Error Bound}
We rearrange the original bound to focus on how the estimation error behaves as $N$ grows, for a fixed probability $\delta$. We can rewrite the bound to emphasize the dependence on \( N \) and \( \delta \) more explicitly:

\[
t = \frac{R_{\max}}{1 - \gamma} \sqrt{ \frac{N}{2} \ln\left( \frac{\lvert \stateSpace \rvert \lvert \Pi_{\learnedGamma} \rvert}{\delta} \right) }
\]

We have the following key observations:

\textbf{Dependence on \( N \):} The term \( \sqrt{N} \) shows that the error bound increases as \( N \) increases, but the rate of increase is proportional to the square root of \( N \). This means that to significantly reduce the error, you need a large increase in \( N \), as the error decreases inversely with \( \sqrt{N} \).

\textbf{Dependence on \( \delta \) (confidence level):} The logarithmic term \( \ln\left( \frac{\lvert S \rvert \lvert \Pi_{\learnedGamma} \rvert}{\delta} \right) \) captures the effect of the confidence level. As \( \delta \) (the failure probability) decreases, the logarithmic term grows, making \( t \) larger. This means that requiring higher confidence (lower \( \delta \)) results in a higher error bound, reflecting the trade-off between confidence and error tolerance.

\textbf{Simplified Bound to Show Error Decrease:}

For a fixed confidence level (i.e., for fixed \( \delta \) and \( \lvert \stateSpace \rvert \lvert \Pi_{\learnedGamma} \rvert \)), you can express the bound more clearly as:

\[
t = O\left( \frac{R_{\max}}{1 - \gamma} \sqrt{N} \right)
\]
This shows that, for fixed confidence, the error bound increases with the square root of \( N \), and as a result, the probability that the error exceeds this bound decreases. In practice, this means that while larger sample sizes allow for larger deviations, these deviations become increasingly unlikely as \( N \) increases. By writing the bound in this direct form, we make it clear that as \( N \) increases, the error bound becomes more controlled (grows slowly compared to \( N \)) while the probability of exceeding this bound shrinks exponentially. This formulation provides an intuitive and direct interpretation of the trade-off between error, sample size, and confidence level.

\section{Details of the Tasks} 
\label{appendix:task_details}
All tasks share a common action space, transition function with $9$ actions per state, and a $10\%$ chance of moving randomly.

\textbf{Gridworld-simple}. In each Gridworld-simple environment, there are $10 \times 10$ states with $4$ randomly selected goal states. The ground-truth reward function assigns $+1$ for goal states and $0$ for all others. The expert policy moves greedily towards the nearest goal.

\textbf{Gridworld-hard}. The Gridworld-hard environment consists of $15 \times 15$ states and $6$ random goal states. It has a larger state space and more goal states than Gridworld-simple, but shares the same reward function.

\textbf{Objectworld-linear}. Each Objectworld-linear environment has $10 \times 10$ states and $10$ randomly placed objects, each with a randomly assigned inner and outer color from the set $C = \{c_0, c_1\}$. Object states receive a reward of $+3$ for outer color $c_0$ and $+1$ for outer color $c_1$. All other states have a $0$ reward.

\textbf{Objectworld-nonlinear}. Objectworld-nonlinear differs from its linear counterpart only in the reward function. The reward depends on the outer colors of surrounding objects: $+1$ for states within 3 grids of $c_0$ and 2 grids of $c_1$, $-1$ for states within 3 grids of $c_1$, and $0$ otherwise.
\section{Modifications to the Linear Programming IRL method (LP-IRL)}
\label{appendix:LP-IRL_objective}
The original objective function of LP-IRL \citep{AndrewNg} is formulated on the full state coverage of the expert policy and the ground-truth $\gtGamma$. We modified the objective function to extend it to our setting with varying partial state coverage and $\learnedGamma$. W.L.O.G, let $a_1$ be the uniquely optimal action for all states. The transition probability vector at state $s$ for the expert action is thus $\Pe(s) = P(.|s, a_1)$, while that for the rest of the actions $a \in \{a_2, ..., a_k\}$ be $\Pa(s) = P(.|s, a)$. Our modified objective function for LP-IRL is as follows:

Given expert demonstrations covering $N$ states,
\begin{align}
   \underset{\learnedR}{\max}\sum_{i=1}^N \underset{a \in \{a_2, ..., a_k\}}{\min} &(\Pe(i)-\Pa(i))(I- \learnedGamma \approPe)^{-1} \learnedR  \nonumber \\
   \text{subject to\;\;} & (\Pe(i)-\Pa(i))(I- \learnedGamma \approPe)^{-1}\learnedR \geq b \nonumber \\
   & b>0, \forall a  \in \{a_2, ..., a_k\} \nonumber\\
   & |\learnedR| \leq R_{max}, \text{for\;} i = 1, ..., N,
   \label{equ:lp_objective}
\end{align}
whereas $\Pe$ is full expert policy transition matrix of size $|\stateSpace| \times |\stateSpace|$ such that $[\Pe](s, s') = \transition(s'| a_1, s)$, while $\approPe$ is an estimate of $\Pe$ from partial expert demonstrations covering only $N \leq |\stateSpace|$ states. Essentially, the transition matrix of the expert policy $\Pe$ is the combination of the transition function $\transition$ (known) and the expert policy $\pi^E$ (unknown). Therefore, estimating transition matrix of the expert policy $\approPe$ is equivalent to estimating the expert policy from expert demonstrations. Our modifications are highlighted below.

\begin{algorithm}[H]
\caption{LP-IRL with Partial Expert Demonstrations}\label{alg:lpirl}
\begin{algorithmic}[1]
    \Require State space $\mathcal{S}$; Action space $\mathcal{A}$;
    $P\in\mathbb{R}^{|\mathcal{S}|\times |\mathcal{A}|\times |\mathcal{S}|}$; expert policy $\pi_E:\{1,\ldots,|\mathcal{S}|\}\to\{1,\ldots,|\mathcal{A}|\}$; 
    demonstration set $\mathcal{D}\subseteq\{1,\ldots,|\mathcal{S}|\}$; discount factor $\hat{\gamma}\in(0,1)$; 
    maximum reward $R_{\max}>0$; margin $\beta>0$.
    
    \Ensure Reward vector $R\in\mathbb{R}^n$.
    
    \State \textbf{Estimate Expert Transitions:}
    \For{$s=1,\ldots,|\mathcal{S}|$}
        \If{$s\in\mathcal{D}$}
            \State $P_E(s,:) \gets P(s,\pi_E(s),:)$.
        \Else
            \State $P_E(s,:) \gets \mathbf{0}$.
        \EndIf
    \EndFor
    
    \State \textbf{Compute Mapping Matrix:}
    \For{each $s=1,\ldots,|\mathcal{S}|$, $a\in\{1,\ldots,|\mathcal{A}|\}\setminus\{\pi_E(s)\}$}
        \State $F^{\hat{\gamma}}(s,a) \gets \Bigl(P_E(s,:) - P(s,a,:)\Bigr) \left(I - \hat{\gamma}\,P_E\right)^{-1}$.
    \EndFor
    \State Set $F^{\hat{\gamma}}(s,a) \gets \mathbf{0}$ for all $s\notin\mathcal{S}$.
    
    \State \textbf{Formulate LP:}
    \State \quad \textbf{Objective:} Minimize 
    \[
    c^\top x,\quad \text{with}\quad c = -\begin{bmatrix}\mathbf{0}_n \\ \mathbf{1}_n \\ -\ell_1\,\mathbf{1}_n\end{bmatrix},\quad x = \begin{bmatrix}V \\ R \\ \xi\end{bmatrix}\in\mathbb{R}^{3n}.
    \]
    \State \quad \textbf{Constraints:} For each $s\in\{1,\dots,n\}$ and each $a\in A\setminus\{\pi(s)\}$, require
    \[
    F^{\hat{\gamma}}(s,\pi(s))\,R - F^{\hat{\gamma}}(s,a)\,R \ge \beta,
    \]
    and enforce the reward bounds
    \[
    -R_{\max}\le R(s)\le R_{\max},\quad \forall\, s.
    \]
    
    \State \textbf{Solve LP:} Assemble the constraints into the matrix inequality $D\,x\le b$ and solve 
    \[
    \min_{x\in\mathbb{R}^{3n}} \; c^\top x \quad \text{s.t.} \quad D\,x\le b.
    \]
    
    \State \Return Extract $R\in\mathbb{R}^n$ from the optimal solution $x^*$.
\end{algorithmic}
\end{algorithm}

\textbf{Estimate Expert Transition Matrix.} We estimate $\approPe$ by setting the transitions of the undemonstrated states to 0. Using this estimation, the action choices at the undemonstrated states do not affect the constraints on the other states. In addition, the value of the undemonstrated state myopically reduced to the state reward function. 

\textbf{Use Different Discount Factors.} Our objective function also allows $\learnedGamma$ to differ from the ground-truth $\gtGamma$. That is, with an estimated $\approPe$, we allow the mapping $\approFFunction{i}{\learnedGamma} = (\Pe(i)-\Pa(i))(I- \learnedGamma \approPe)^{-1}$ to take different $\learnedGamma$s to account the temporal effect of the policy at different horizons.

\textbf{Remove L1 Regularization.}
The original LP-IRL uses the $L_1$-regularization on the reward function, i.e., $|R|$, to resolve the ambiguity among the feasible reward functions. This assumption on the preferred reward function form can be a confounding factor to the performance of the induced policy, therefore we remove it to focus on the effect of horizons.

\textbf{Enforce Uniquely Optimal Expert Policy.} To enforce the induced policy to be uniquely optimal, we modified the constraints to be strictly positive, i.e., $b > 0$.
\section{Modifications to the Maximum Entropy IRL method (MaxEnt-IRL)}
\label{appendix:maxentirl}

Our theoretical analysis of the effective horizon stems from the error bound between two feasible reward function sets: the ground truth expert policy set and the approximated expert policy set derived from limited expert demonstrations. However, most IRL algorithms, such as Maximum Entropy IRL~\citep{maxentirl} and Maximum Margin IRL~\citep{maxmargin_IRL}, obtain a single reward function from the feasible set using specific criteria. The error bound between the learned reward function and the ground-truth $\gtR$ may deviate from the bound between their corresponding feasible sets. In this section, we extend our theoretical insights to Maximum Entropy Inverse Reinforcement Learning~\citep{maxentirl} and demonstrate that our conclusions on the effective horizon are still applicable when choosing the reward function based on specific criteria.

\begin{figure*}
    \centering
    \begin{tabular}{c}

\includegraphics[width=0.95\textwidth]{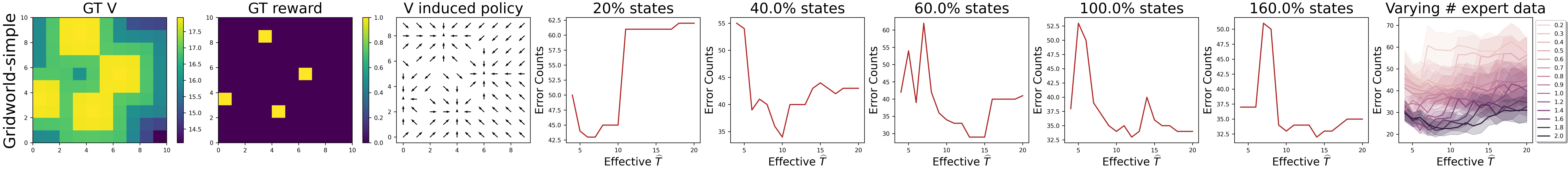}
         \\
         
      \includegraphics[width=0.95\textwidth]{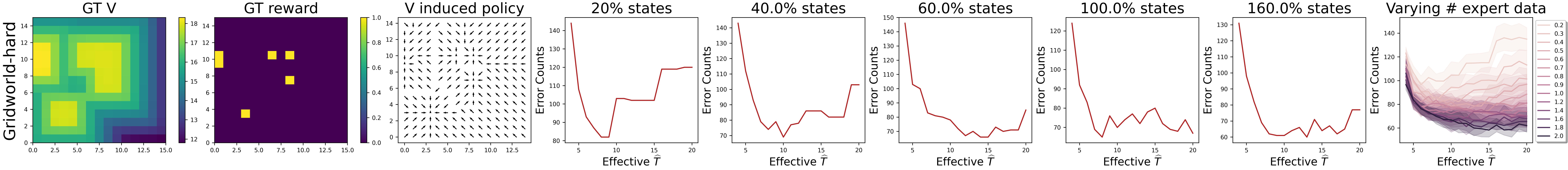}
        \\ 
        
        \includegraphics[width=0.95\textwidth]{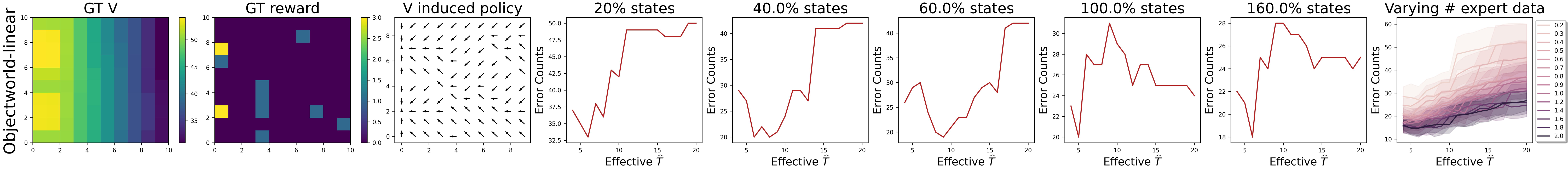}
        \\ 
    
        \includegraphics[width=0.95\textwidth]{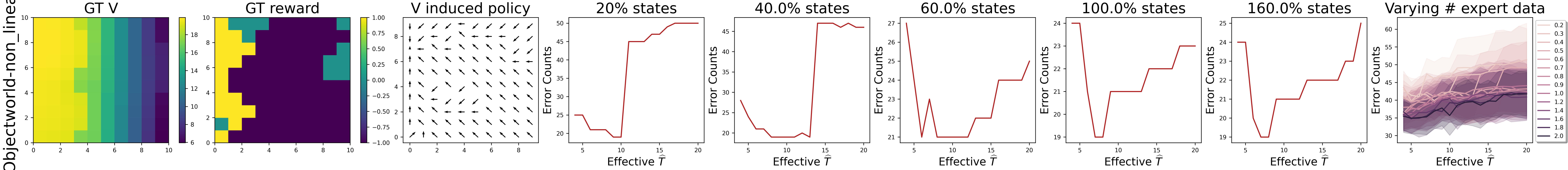}
    \end{tabular}
\caption{Summary of MaxEnt-IRL with different horizons for the four tasks. For each task, we present the ground-truth value function (column 1), ground-truth reward function (column 2), expert policy (column 3), error count curves in all states for different amount of expert data for a \textbf{single instance} of the task (columns 4-8), and finally, a summary of the error curves for a \textbf{batch} of 10 MDPs (column 9). In all four tasks, the ground-truth horizon $\gtHorizon = 20$. The optimal horizon $\optimalHorizon<\gtHorizon$ for varying amount of expert data.} 
\label{fig:maxent_envs}
\Description{MaxentIRL summary}
\end{figure*}

\subsection{Maximum Entropy IRL}
MaxEnt-IRL is formulated for finite horizon MDPs with horizon $\gtHorizon$ and no discounting. The reward function is linear in the state feature, i.e., $r = \theta^{\gtHorizon}\phi(s)$. The reward of a trajectory $\tau = (s_0, ..., s_{\gtHorizon})$ is the undiscounted sum of all state rewards, where $f_{\gtHorizon}$ is the total feature count for the trajectory $\tau$. MaxEnt IRL aims to match the feature counts between the policy and the expert demonstrations while maximizing the entropy of the induced policy. This leads to a distribution over behaviors constrained to match feature expectations, as shown in Equation (\ref{equ:maxent_dis}). 
\begin{equation}
    p(\tau|\theta, \transition, \gtHorizon) \approx \frac{1}{Z(\theta, \transition, \gtHorizon)} \exp^{\theta^T f_{\gtHorizon}} \prod_{s_t, a_t, s_{t+1} \in \tau} \transition(s_{t+1}|s_t, a_t) \label{equ:maxent_dis}
\end{equation}

The MaxEnt IRL objective is to choose the reward parameter $\theta$ that maximizes the probability of expert demonstrations under the distribution in Equation (\ref{equ:maxent_dis}):
\begin{equation}
    \theta^* := \underset{\theta}{\arg \max}\;\; L(\theta) = \underset{\theta}{\arg\max} \sum_{\tau_i \in demo} \log p(\tau_i|\theta, \transition, \gtHorizon) \label{equ:maxent_objective}
\end{equation}

The objective function in equation (\ref{equ:maxent_objective}) can be optimized using gradient-based methods, where the gradient is the difference between the empirical and learner's expected feature counts, expressed in terms of expected state visitation frequencies, $D_{s_i, \gtHorizon}$.
\begin{equation}
    \nabla L(\theta) = \tilde{f} - \sum_{\tau \in demo} p(\tau|\theta, \transition)f_{\gtHorizon} = \tilde{f} - \sum_{s_i}D_{s_i, \gtHorizon}f_{s_i} \label{equ:maxent_gradient}
\end{equation} 

\subsection{Maximum Entropy IRL with varying Effective Horizon}

To extend MaxEnt IRL to varying effective horizons, we optimize trajectories with horizon $\learnedHorizon \leq \gtHorizon$. The corresponding optimal trajectory distribution is given in Equation (\ref{equ:maxent_dis_discounted}), where $f_{\learnedHorizon}$ is the undiscounted sum of $\learnedHorizon$ consecutive states in trajectory $\tau = (s_0, ..., s_{\learnedHorizon})$.

\begin{equation}
    p(\tau|\theta, \transition, \learnedHorizon) \approx \frac{1}{Z(\theta, \transition, \learnedHorizon)} \exp^{\theta^T f_{\learnedHorizon}} \prod_{t = 0}^{\learnedHorizon-1} \transition(s_{t+1}|s_t, a_t) \label{equ:maxent_dis_discounted}
\end{equation}
whereas the feature factor $f_{\learnedHorizon} = \sum_{t=0}^{\learnedHorizon} \phi(s_t)$ is the undiscounted sum of $\learnedHorizon$ consecutive states in trajectory $\tau = (s_0, ..., s_{\learnedHorizon})$.

To investigate the performance of varying effective horizons under the same amount of expert data, we adjust the expert trajectories' lengths to match the planning horizons $\learnedHorizon$, while maintaining the total number of demonstrated states constant. Given that the expert covers $N$ states, for each effective horizon $\learnedHorizon$, we collect $N//\learnedHorizon$ trajectories with random initial states. Consequently, the gradient of the objective function is modified as shown in Equation (\ref{equ:maxent_gradient_discounted}): 
\begin{align}
    \nabla L(\theta, \learnedHorizon)  = \tilde{f}_{\learnedHorizon} - \sum_{\tau \in demo} p(\tau|\theta, \transition, \learnedHorizon)f_{\learnedHorizon}   = \tilde{f}_{\learnedHorizon} - \sum_{s_i}D_{s_i, \learnedHorizon}f_{s_i} \label{equ:maxent_gradient_discounted}
\end{align} 
where the expert expected feature vector $\tilde{f}_{\learnedHorizon}$ and the induced state visitation frequency $D{s_i, \learnedHorizon}$ consider only trajectories with length $\learnedHorizon$.

Our method for collecting expert demonstrations differs from the standard IRL approaches, where expert demonstrations are fixed and given upfront. Instead, we generate expert trajectories independently for each horizon $\learnedHorizon$ by rolling out trajectories with random initial states using the expert policy. 

An alternative approach is to augment the fixed set of expert trajectories with length $\gtHorizon$ by breaking them into $\gtHorizon-\learnedHorizon+1$ shorter segments each with length $\learnedHorizon$. However, this method can slightly underestimate the visitation frequency of the final states of the original trajectories due to uniform sampling over the first $\gtHorizon-\learnedHorizon+1$ states. Our data collection method eliminates the potential bias that may arise from the quality of the expert demonstrations. In addition, our initial state distribution $\rho_0$ is a uniform distribution over the entire state space, which ensures a good approximation of the true state visitation frequency of the expert policy.

\subsection{Results}

\begin{figure}[htp]
\centering
\includegraphics[width=1\columnwidth]{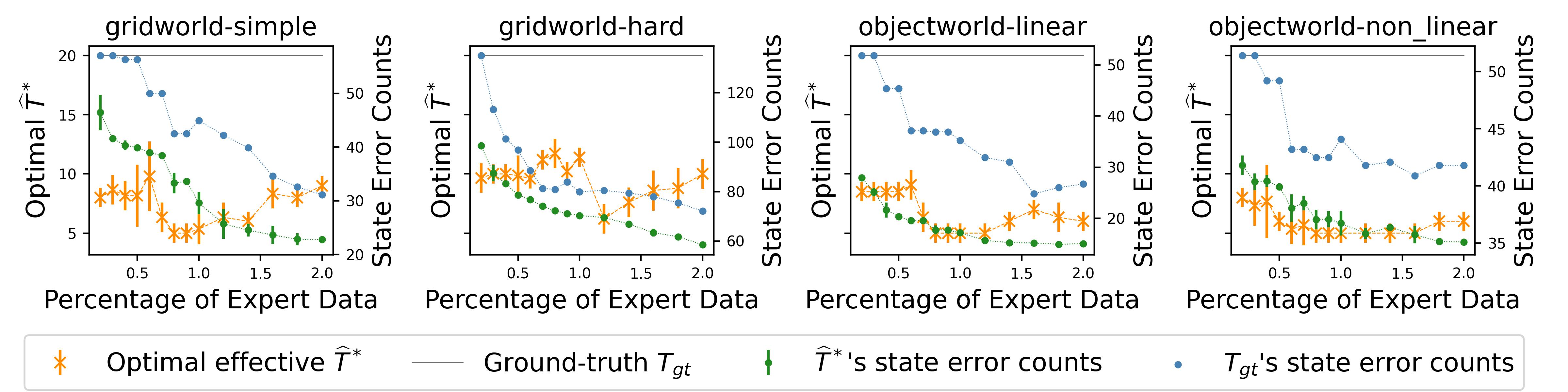}
\caption{Optimal horizons ($\optimalHorizon$) for MaxEnt-IRL at varying amount of expert data. We select the optimal $\optimalHorizon$ for each of 10 sampled task environments using the algorithm in Section~\ref{sec:cross-val}, based on the amount of expert data. Orange curves show how $\optimalHorizon$ changes with the amount of expert data, while green curves display corresponding error counts. The ground-truth $\gtHorizon = 20$ is depicted by a grey line, with corresponding error lines in blue. The trends are consistent with LP-IRL.}
\label{fig:gamma_vs_coverage_maxent}
\Description{Gamma vs Data MaxEntIRL}
\end{figure}

\begin{figure}[htp]
\centering
\includegraphics[width=1\columnwidth]{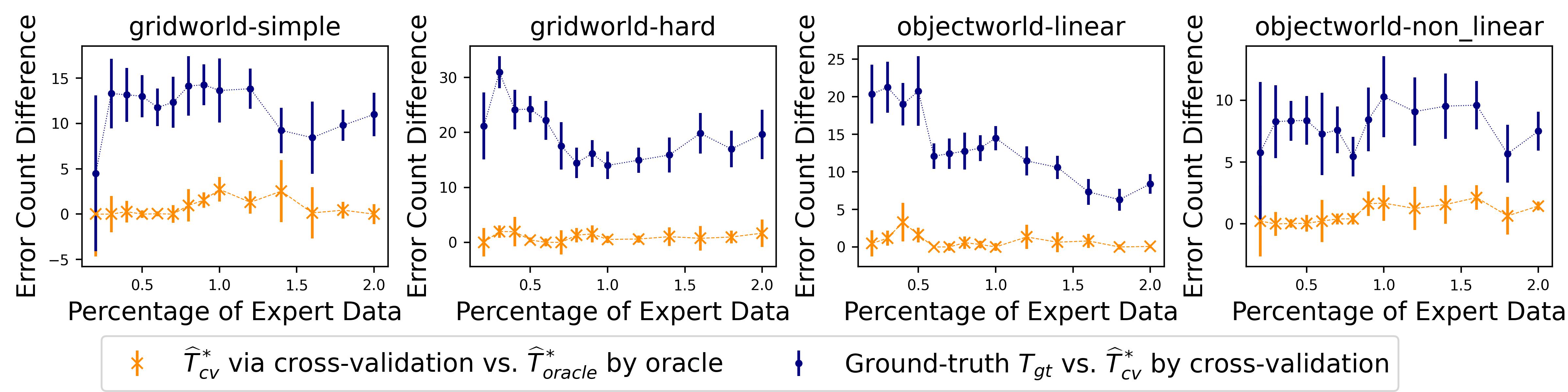}
\caption{Cross-validation results for four tasks on MaxEnt-IRL. The $x$-axis represents the amount of expert data, while the $y$-axis shows error count differences in all states for various $\optimalHorizon$s: $\optimalHorizonCv$ learned via cross-validation, and $\optimalHorizonCvAll$ chosen optimally using the oracle. Orange dots depict error count differences between the induced policies of $\optimalHorizonCv$ and $\optimalHorizonCvAll$, while blue dots represent differences between the induced polices of $\optimalHorizonCv$ and $\gtHorizon$.}
\label{fig:summary_cross_validation_maxent}
\Description{Cross Validation MaxEntIRL}
\end{figure}

The performance trends of MaxEnt-IRL align with those of LP-IRL, further supporting our theoretical results. Figure \ref{fig:maxent_envs} provides a summary of the performance. Figure \ref{fig:gamma_vs_coverage_maxent} demonstrates how the optimal horizon $\optimalHorizon$ changes with increasing amount of expert data for MaxEnt-IRL.
Figure~\ref{fig:summary_cross_validation_maxent} summarizes cross-validation results for MaxEnt-IRL.

\begin{algorithm}[H]
\caption{MaxEnt IRL with Expert Trajectory Sampling}\label{alg:maxent-irl}
\begin{algorithmic}[1]
    \Require 
    \(S\in \mathbb{R}^{N}\) (states),
      \(A\in \mathbb{R}^{m}\) (actions), discount \(\gamma\in(0,1)\), 
      \(P\in\mathbb{R}^{N\times m\times N}\) (transition tensor), horizon \(\widehat{T}\), number of trajectories \(T\), \(\Phi\in\mathbb{R}^{N\times D}\) (feature matrix), 
      epochs \(E\), and learning rate \(\eta\).
    \Ensure Reward vector \(R\in\mathbb{R}^N\).

    \State \textbf{Sample Expert Demonstrations:}
    \For{\(t=1,\dots,T\)}
       \State Sample an initial state \(s_0\)
       \State Roll out trajectory \(\tau_t = \{(s_k,a_k)\}_{k=0}^{\widehat{T}}\) using the expert policy and \(P\)
    \EndFor
    \State Let \(\mathcal{T} = \{\tau_1,\dots,\tau_T\}\)
    \State \(\tilde{\phi} \gets \texttt{find\_feature\_expectations}(\Phi, \mathcal{T}, N)\)
    
    \State \textbf{Initialize Weights:}
    \State Generate \(K\) random weight vectors \(\{\alpha_i\}_{i=1}^{K}\subset \mathbb{R}^{D}\)
    
    \State \textbf{IRL Optimization:}
    \For{\(i=1,\dots,K\)}
       \State Let \(\alpha \gets \alpha_i\)
       \For{\(j=1,\dots,E\)}
           \State \(r \gets \Phi\,\alpha\)
           \State \(\mu \gets \texttt{find\_expected\_svf}(N,\, r,\, \mathcal{T},\, \widehat{T})\)
           \State \(\nabla \gets \tilde{\phi} - \Phi^\top \mu\)
           \State \(\alpha \gets \alpha + \eta\,\nabla\)
       \EndFor
       \State \(L_i \gets \|\nabla\|_1\)
       \State \(R_i \gets \Phi\,\alpha\)
    \EndFor
    \State \(i^* \gets \arg\min_{i} L_i\)
    \State \(R \gets R_{i^*}\)
    \Return \(R\)
\end{algorithmic}
\end{algorithm}

\end{document}